\documentclass[runningheads]{llncs}
\usepackage[T1]{fontenc}
\newcommand{\ignore}[1]{}
\usepackage{graphicx}
\usepackage{booktabs}
\usepackage{multirow}
\usepackage{siunitx}
\usepackage{adjustbox}
\usepackage[table]{xcolor}
\usepackage{graphicx} 
\usepackage{tikz}
\usepackage{pgfplots}
\usepackage{amsmath,amssymb}
\usepackage{bm}
\usepackage{pdflscape}
\usetikzlibrary{calc,math,chains,decorations.pathreplacing}
\usepackage{algorithm}
\usepackage{algorithmic}
\usepackage{xspace} 
\newcommand{\dominates}{\prec}
\newcommand{\gsemo}{GSEMO\xspace}

\usepackage{cite}

\begin{document}
%

\title{Evolutionary Algorithms and Multi-Objective
Minimum Spanning Trees with Limited Distinct
  Weight Values}

 \titlerunning{EAs and MO-MSTs with Limited Distinct
  Weight Values}
%
\author{Narges Tavassoli Kejani\inst{1}\orcidID{0009-0003-7529-2298} \and
Andrew M. Sutton\inst{2}\orcidID{0000-0003-1295-6715} \and
Frank Neumann\inst{3}\orcidID{0000-0002-2721-3618}}

%
%
\institute{Univ. Lille, CNRS, Centrale Lille, UMR 9189 CRIStAL, F-59000, France \\
\and
University of Minnesota Duluth, Minnesota, USA\\
\and Optimisation and Logistics,
School of Computer Science and Information Technology, 
Adelaide University, Adelaide, Australia\\
}
%

\maketitle              
\begin{abstract}
Evolutionary algorithms have been used for a wide range of
multi-objective combinatorial optimization problems. Despite practical
success, theoretical results on the runtime of evolutionary algorithms
for multi-objective combinatorial problems are rather limited. One
classical problem that has been investigated is the multi-objective
minimum spanning tree problem for which runtime bounds have been
obtained to compute all extremal corner points of the Pareto front.
With this paper, we provide some more detailed insights into the
structure of the Pareto front when the edge weights take on a small
number of distinct values. Based on these insights, we derive new runtime results for evolutionary multi-objective algorithms and complement our theoretical results with experimental investigations.

\keywords{evolutionary multi-objective algorithms  \and theory \and runtime analysis}
\end{abstract}

\section{Introduction}
Evolutionary algorithms have been widely applied in the context of multi-objective optimization~\cite{DBLP:books/daglib/0004745} since the population of an evolutionary algorithm can be evolved into a set of trade-off solutions with respect to the given objective functions. 
A wide range of evolutionary multi-objective algorithms have been designed and shown to be highly successful over the last thirty years. This includes solving classical multi-objective combinatorial problems such as the multi-objective minimum spanning tree problem for which different evolutionary approaches have been developed~\cite{DBLP:journals/eor/ZhouG99,DBLP:conf/cec/KnowlesC01,DBLP:journals/ec/BossekG24}.

Despite these many successful applications, understanding from a
theoretical perspective of which type of problem can provably be solved by evolutionary multi-objective algorithms is still a challenging task. Different results on various aspects of evolutionary multi-objective optimization have been obtained using rigorous runtime analysis~\cite{BookDoeNeu}, which is a major direction in the area of theory of evolutionary computation.

While there are many results on artificial benchmarks problems that
showcase the working principles of evolutionary algorithms (see, e.g.~\cite{Giel_2003,DBLP:journals/tec/LaumannsTZ04,DBLP:journals/ai/ZhengD23,DBLP:journals/ai/Opris26}), as well as results for constrained single-objective optimization problems~(see, e.g.~\cite{DBLP:journals/nc/NeumannW06,DBLP:journals/algorithmica/KratschN13,DBLP:journals/ec/FriedrichHHNW10,QianLFTTCS23}) that are solved via multi-objectivization~\cite{DBLP:conf/emo/KnowlesWC01}, results for classical combinatorial multi-objective optimization problems are very limited.
We are only aware of studies regarding the runtime behavior of MOEAs for multi-objective shortest paths~\cite{DBLP:journals/ec/Horoba10} and multi-objective minimum spanning trees~\cite{Neumann_2007}.
The first results of an algorithm called \gsemo for multi-objective minimum spanning trees appeared at PPSN 2004~\cite{DBLP:conf/ppsn/Neumann04} and its journal extension has been published in \cite{Neumann_2007}.
Current available results for MOEAs and multi-objective minimum spanning trees consider the computation of extremal corner points of the Pareto front and provide runtime bounds for algorithms such as GSEMO~\cite{Neumann_2007}, NSGA-II~\cite{DBLP:conf/ijcai/CerfDHKW23} and MOEA/D~\cite{DBLP:conf/nips/DoN0S23}. The expected time for MOEA/D to compute these extremal corner points is polynomial as shown in \cite{DBLP:conf/nips/DoN0S23}, whereas the runtime bounds shown for GSEMO and NSGA-II in the mentioned articles are only pseudo-polynomial.

In~\cite{Neumann_2007}, Neumann focuses on computing extremal Pareto-optimal objective vectors. The natural question arises whether MOEAs can also compute non-extremal Pareto-optimal objective vectors. To provide new insights into the behavior of MOEAs for the multi-objective minimum spanning tree problem, we consider the case where edge weights take only a small number of distinct values. Specifically, we investigate bi-objective settings where the edge weights are restricted to two or three distinct values and derive structural insights into the geometry of the Pareto front. Based on these insights, we provide runtime analyses of \gsemo showing that the entire Pareto front can be computed in expected polynomial time.

We investigate Pareto front structure and its influence on the
runtime behavior of \gsemo on bi-objective minimum spanning tree
(bi-MST) problems under several different edge weight configurations.
For the two-weight case, we show that all Pareto-optimal objective vectors lie on a single line, enabling \gsemo to compute the entire Pareto front efficiently in expected time $O(m^2 n \log n)$. For the three-weight case, we distinguish between equidistant and non-equidistant settings: in the equidistant case, the efficient boundary of the convex hull consists of a constant number of linear phases, while in the non-equidistant case, some of Pareto-optimal objective vectors do not lie on the convex hull. Finally, for a constructed instance based on $k$ pairs of triangles, we prove that the Pareto front contains exactly $3k+1$ objective vectors, of which exactly $k$ do not lie on the convex hull, and we show that \gsemo computes the entire Pareto front in expected time $O(n^4)$.
Furthermore, we complement our theoretical investigations with experimental ones that study the impact of different instance parameters on the empirical runtime.

The paper is structured as follows. In Section~\ref{sec2}, we introduce the problem and algorithm that is subject to our investigations. 
In Section~\ref{sec3}, we investigate general structural properties of
Pareto-optimal solutions that form the basis for our analysis. We then
apply these insights to analyze multi-objective spanning trees with
two distinct edge weight values in Section~\ref{sec4}  and extend this
analysis to three distinct edge weight values in Sections~\ref{sec5} and~\ref{sec6}. In Section~\ref{sec7}, we discuss how our results translate to popular multi-objective evolutionary algorithms.
We finish with some conclusions. 

\section{Preliminaries}
\label{sec2}
A \emph{bi-Objective Combinatorial Optimization Problem} (BCOP) is commonly defined as follows:
$$\text{Optimize } f(x) = (f_1(x), f_2(x)) \;\text{s.t. }\; x \in \mathcal{D},$$
where objective functions $f_i$ are optimized, $x$ denotes a solution (i.e., a vector of decision variables), and $\mathcal{D}$ represents the discrete set of feasible solutions. The \emph{objective space}, denoted by $\mathcal{Z}$, is defined as the image of $f$. Each solution $x \in \mathcal{D}$ is thus mapped to a point $f(x) \in \mathcal{Z}$.

In a minimization context, a solution $x$ is said to \emph{weakly dominate} another solution $y$, denoted by $x \preceq y$, if and only if for all $i \in \{1,2\}$, $f_i(x) \leq f_i(y)$. Moreover, a solution $x$ is said to \emph{dominate} another solution $y$, denoted by $x \dominates y$, if and only if $x \preceq y$, and there exists at least one index $i \in \{1, 2\}$ such that $f_i(x) < f_i(y)$. 
The dominance relation induces a partial order over the solution space, since some pairs of solutions cannot be compared. Such solutions are called \emph{incomparable}.
A \emph{Pareto-optimal solution} is a solution $x^* \in \mathcal{D}$ for which there exists no solution $x \in \mathcal{D}$ such that $x \dominates x^*$. The set of all Pareto-optimal solutions is called the \emph{Pareto-optimal set}. The image of this set under $f$ is referred to as the \emph{Pareto front} $\mathcal{F}$. The \emph{convex hull} of $\mathcal{F}$, $\mathrm{conv}(\mathcal{F})$, is defined as the smallest convex set containing all Pareto-optimal objective vectors. The \emph{efficient boundary} of $\mathrm{conv}(\mathcal{F})$ is the set of non-dominated points on its boundary. In the bi-objective case, this corresponds to the lower-left boundary of $\mathrm{conv}(\mathcal{F})$.

The \emph{bi-objective Minimum Spanning Tree} (bi-MST) problem is defined as a particular case of BOCP on graphs. It can be formalized as follows:
$$\text{Minimize } f(T) = (f_1(T), f_2(T)) \;\text{s.t. }\; T \in \mathcal{T},$$
where $G = (V, E)$ is a connected graph, $\mathcal{T}$ denotes the set of all spanning trees of $G$, and for $i \in \{1,2\}$ each objective function $f_i : \mathcal{T} \rightarrow \mathbb{R}_{>0}$ assigns a cost to a spanning tree $T \in \mathcal{T}$ according to the $i$-th criterion. Each objective function is typically defined as the sum of edge weights with respect to a given weight function $w_i : E \rightarrow \mathbb{R}_{>0}$, i.e., $f_i(T) = \sum_{e \in T} w_i(e)$. The goal is to identify all Pareto-optimal spanning trees; that is, all trees that are not dominated with respect to the vector-valued objective function $f$. The problem is NP-hard~\cite{ehrgott_2005}.

Let $G = (V, E)$ be a connected graph with $n = |V|$ vertices and
$m = |E|$ edges and associated weight functions
$w_1,w_2 \colon E \to \mathbb{R}_{>0}$. We consider multi-objective
evolutionary algorithms solving the bi-MST problem on $G$.
Following~\cite{Neumann_2007}, let
$w_i^{\min}:= \min_{e \in E} w_i(e)$ and
$w_i^{\max}:= \max_{e \in E} w_i(e)$ denote the minimum and maximum
edge weights with respect to objective $i \in \{1,2\}$, respectively.
Furthermore, define 
\[
w_{\min} := \min_{i \in \{1,2\}} w_i^{\min}, \qquad
w_{\max} := \max_{i \in \{1,2\}} w_i^{\max}, \qquad
w_{ub} := n^2 \cdot w_{\max}.
\]
Solutions are represented as bit strings $s \in \{0,1\}^m$, where $s_j = 1$ if edge $e_j$ is selected and $s_j = 0$ otherwise. 

The fitness function is defined as
$f(s) = (f_1(s), f_2(s))$,
where, for each objective $i \in \{1,2\}$,
\[
f_i(s) := (c(s) - 1)\cdot w_{ub}^2
+ (e(s) - (n - 1))\cdot w_{ub}
+ \sum_{e_j \in E} s_j \, w_i(e_j).
\]
Here, $c(s)$ denotes the number of connected components in the graph represented by $s$, and $e(s)$ denotes the number of edges in this graph. 
The penalty terms ensure feasibility: the first term penalizes disconnected solutions, while the second penalizes solutions containing cycles. 
As a result, only connected graphs with exactly $n - 1$ edges (i.e., spanning trees) can achieve minimal penalty. 
For a spanning tree $T$, let $x_{ij}(T)$ denote the number of edges of type $(i,j)$. 
Moreover, let $x_{w_{\min}}(T)$ and $x_{w_{\max}}(T)$ denote the number of edges in $T$ with weights $(w_{\min}, w_{\min})$ and $(w_{\max}, w_{\max})$, respectively. When the underlying spanning tree is clear from the context, we write $x_{ij}$ instead of $x_{ij}(T)$.

We consider the \gsemo (Global Simple Evolutionary Multi-objective Optimizer) algorithm (see Algorithm~\ref{alg:gsemo}), introduced by Giel~\cite{Giel_2003}. This algorithm can be viewed as a generalization of the well-known $(1+1)$ EA, which has been extensively studied in theoretical analyses, to the multi-objective setting.
The \gsemo maintains a population $P$ of mutually non-dominated solutions. Initially, a solution is generated uniformly at random and inserted into $P$. 
In each iteration, a solution $x \in P$ is selected uniformly at random. A new solution $y$ is created by standard bit mutation, where each bit is flipped independently with probability $\frac{1}{m}$. 
If $y$ is not dominated by any solution in $P$, it is added to the population, and all solutions in $P$ that are dominated by $y$ are removed. Otherwise, $y$ is discarded. 
For our investigations, we consider the expected number of iterations of the while loop until \gsemo has obtained a population $P$ that includes for each Pareto-optimal objective vector a corresponding solution. We call the expected number of iterations of \gsemo as the expected time to obtain a given goal.
\begin{algorithm}[t]
\caption{\gsemo}
\label{alg:gsemo}
\begin{algorithmic}[1]
\STATE \textbf{Input:} Fitness function $f : \{0,1\}^m \rightarrow \mathbb{R}^k$
\STATE Initialize $P \leftarrow \{x\}$, where $x \in \{0,1\}^m$ is chosen uniformly at random
\WHILE{stopping condition not met}
    \STATE Select $x \in P$ uniformly at random
    \STATE Create $y$ by standard bit mutation (each bit flipped independently with probability $\frac{1}{m}$)
    \IF{there is no $z \in P$ such that $f(z) \dominates f(y)$}
        \STATE $P \leftarrow (P  \setminus \{z \in P \mid f(y) \preceq f(z)\}) \cup \{y\}$
    \ENDIF
\ENDWHILE
\end{algorithmic}
\end{algorithm}

We denote by $q_1$ the objective vector of a Pareto-optimal solution that is a minimum spanning tree with respect to the weight function $w_1$. Similarly, we denote by $q_r$ the objective vector of a Pareto-optimal solution that is a minimum spanning tree with respect to the weight function $w_2$.
The following two lemmas were proven by Neumann~\cite{Neumann_2007}.
\begin{lemma}[Lemma~4 in~\cite{Neumann_2007}] \label{lem:neumann}
\gsemo reaches a population consisting only of spanning trees in expected time $O(m \log n)$.
\end{lemma}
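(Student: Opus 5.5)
The plan is to use the lexicographic structure of the penalized fitness. Because $w_{ub}=n^2 w_{\max}$ dominates every possible edge-weight sum, the number of components $c(s)$ is most significant, then the edge count $e(s)$, and only then the actual weights. The same penalty $(c(s)-1)w_{ub}^2+(e(s)-(n-1))w_{ub}$ is added to both objectives. Hence, if two search points differ in their penalty, the one with the smaller penalty dominates the other in both objectives. The proof splits into two phases: first reach a connected graph, then shrink it to a spanning tree.

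\textbf{Phase 1 (reaching connectivity).} Let $c^*$ be the minimal value of $c(s)$ in $P$. Because of the dominance order, $c^*$ never increases. A solution attaining $c^*$ always remains represented: it can only be removed by a solution that weakly dominates it, and such a solution must have $c\le c^*$.

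I will first argue that, apart from components with equal penalty, all solutions in $P$ share the minimal penalty level. Points with larger $c$ or larger $e$ at the same $c$ are dominated. Therefore all members of $P$ have the same $(c,e)$ pair and differ only in weights. Since these are mutually incomparable trade-offs with equal penalty, $|P|$ is at most the number of distinct $f_1$ values. I would rather avoid bounding $|P|$ here.

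A cleaner route is to consider only the member attaining the minimal penalty, and note that every member of $P$ has the same penalty. If $c(s)=k>1$, there are at least $k-1$ edges of $G$ joining distinct components. These exist because $G$ is connected; for a fixed spanning tree of $G$, at least $k-1$ of its edges cross components. Flipping exactly one such edge reduces $c$ by one and is accepted. For a uniformly selected $x\in P$, this happens with probability at least $(k-1)/(em)$, which holds for any $x$ because all members share the same $c$. A fitness-level sum $\sum_k em/(k-1)=O(m\log n)$ then gives the bound.

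\textbf{Phase 2 (removing surplus edges).} Now every member is connected with $e(s)=n-1+r$. Each cycle contains at least $r$ edges whose removal keeps the graph connected: a connected graph with $r$ extra edges has at least $r$ non-bridge edges, and removing one reduces $e$ without increasing $c$. Such a step has probability at least $r/(em)$. Summing over $r\le m$ gives $O(m\log m)=O(m\log n)$. Once $r=0$, every member is a spanning tree, since all members share the minimal penalty.

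\textbf{Main obstacle.} The subtle step is the claim that all population members share the same penalty, so that the selection of $x$ does not cost an extra factor of $|P|$. I would prove this by showing it is invariant. Initially $|P|=1$. Whenever an offspring with a strictly smaller penalty is accepted, it dominates, and so removes, every current member. An offspring with a larger penalty is rejected. This invariance makes both phases independent of the population size.
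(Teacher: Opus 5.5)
Your proof is correct and is essentially the standard argument behind Neumann's Lemma~4, which this paper only cites. The penalty is added identically to both objectives, so every member of $P$ has the same pair $(c(s),e(s))$. A fitness-level argument over the number of components (at least $k-1$ crossing edges, success probability at least $(k-1)/(em)$), followed by one over the surplus edges (at least $r$ non-bridge edges, success probability at least $r/(em)$), then gives $O(m\log n)$ independently of $|P|$.

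Two points to tidy up. First, ranking $c(s)$ above $e(s)$ needs more than $w_{ub}$ exceeding every weight sum; it suffices that $w_{ub}\ge m+1$. This holds for the integer weights of Neumann's setting but is an implicit assumption for arbitrary positive real weights. Second, the sentence \emph{each cycle contains at least $r$ edges} is false as written and should be replaced by your subsequent, correct count of at least $r$ non-bridge edges.
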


The following lemma gives an upper bound on the time to include a solution for each of the objective vectors $q_1$ and $q_r$. It depends on the value of $|P|_{\max}$, which denotes the maximum population size observed during the run of the algorithm.

\begin{lemma}[Lemma~5 in~\cite{Neumann_2007}]
\label{lem:neumann_extreme_points}
\gsemo constructs a population that contains a spanning tree corresponding to each of the extreme objective vectors $q_1$ and $q_r$ in expected time $O(m^2 |P|_{\max} (\log n + \log w_{max}))$.
\end{lemma}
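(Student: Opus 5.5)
We prove the statement by a multiplicative drift argument, applied once for each extreme objective vector; we give it for $q_1$, and $q_r$ is symmetric. By Lemma~\ref{lem:neumann} we may assume, at an additive cost of $O(m\log n)$, that the population consists only of spanning trees. From then on, every accepted offspring is a spanning tree: any non-tree would be dominated because of the penalty terms in $f_i$.

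We track the solution $x^*\in P$ with minimal $f_1$-value. Ties in $f_1$ are broken by minimal $f_2$, so $x^*$ is never dominated. Its value $f_1(x^*)$ never increases, because a solution with smaller $f_1$ cannot be dominated by one with larger $f_1$. Define the potential $g := f_1(x^*) - f_1(q_1)$. We use the standard exchange argument for MSTs, as in the analysis of the $(1+1)$ EA by Neumann and Wegener. For a spanning tree $T$ with $w_1(T)>\mathrm{opt}_1$, there exist $k\le n-1$ edge exchanges (insert an edge $e\notin T$, remove an edge $e'$ on the created cycle). Each is a 2-bit flip, each yields a spanning tree, and their total improvement equals $w_1(T)-\mathrm{opt}_1$. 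Hence a uniformly random one of these exchanges improves $f_1$ by at least $g/(n-1)$ in expectation.

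Selecting $x^*$ happens with probability at least $1/|P|_{\max}$. A specific 2-bit flip then happens with probability at least $1/(em^2)$. So the expected decrease of $g$ per iteration is at least $g/(e m^2 |P|_{\max})$, up to lower-order factors absorbed by choosing among the exchanges. The offspring has strictly smaller $f_1$ than every member of $P$, so it is not dominated and is accepted. It becomes the new $x^*$, or is replaced by a solution with equal $f_1$ and smaller $f_2$, which leaves $g$ unchanged.

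We apply multiplicative drift. The initial value satisfies $g\le (n-1)w_{\max}$. If weights are integers, the smallest positive value of $g$ is $1$; otherwise we use the standard lower bound on the minimum positive gap. This gives expected time $O(m^2|P|_{\max}\log(n w_{\max})) = O(m^2|P|_{\max}(\log n+\log w_{\max}))$.

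Reaching $g=0$ means $x^*$ is a $w_1$-MST. After that, $f_1(x^*)$ stays optimal, and we still need the $f_2$ value among $w_1$-MSTs to be minimal, since $q_1$ is Pareto optimal. We handle this with a second drift phase. We use the lexicographic weight $w_1\cdot W + w_2$ with $W$ large, for which exchanges again exist, and the analogous potential yields the same bound. Equivalently, we run the first phase directly on the lexicographic potential. The main obstacle is this lexicographic refinement: we must make the exchange lemma apply to the combined weight while the logarithm of the potential range stays $O(\log n+\log w_{\max})$. Our first attempt is to run drift on $f_1$ and then, separately, on $f_2$ restricted to $w_1$-MSTs. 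The exchanges in the second phase preserve $w_1$ (equal-weight swaps among MSTs, via the matroid basis exchange property). Running the argument symmetrically for $q_r$ and summing the times proves the lemma.
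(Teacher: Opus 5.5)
Your route is the standard argument behind this lemma, which the paper imports from Neumann (2007) without proof: select the lexicographically best tree $x^*$ with probability at least $1/|P|_{\max}$, then rerun the Neumann--Wegener analysis (bijective exchange lemma plus multiplicative drift) on it. As written, however, your drift only drives $x^*$ to \emph{some} $w_1$-minimal tree. Reaching the vector $q_1$ is the step you yourself call the main obstacle, and you leave it at two sketched alternatives. The two-phase alternative rests on an unproved claim. The basis exchange property alone does not give a \emph{bijection} of single swaps that each keep the tree $w_1$-minimal and whose $w_2$-gains add up to $f_2(x^*)-f_2(q_1)$.

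The gap closes by running a single phase on $w'(e):=W\,w_1(e)+w_2(e)$ with $W:=n\,w_{\max}$. For integer weights, distinct $f_1$-values of spanning trees differ by at least $1$, while $f_2$-values differ by less than $W$. So on spanning trees the $w'$-order is exactly the lexicographic order on $(f_1,f_2)$. Hence your $x^*$ is the $w'$-minimizer in $P$, and every $w'$-minimal tree has objective vector $q_1$. The exchange lemma holds for any weight function. Whenever $g:=w'(x^*)-\min_T w'(T)$ is positive, it satisfies $1\le g\le (n-1)(W+1)w_{\max}=O(n^2w_{\max}^2)$, which yields the $O(\log n+\log w_{\max})$ factor.

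You must also add the acceptance step for improvements that keep $f_1$. An offspring $y$ with $f_1(y)=f_1(x^*)$ and $f_2(y)<f_2(x^*)$ cannot be dominated by any $z\in P$: such a $z$ would have $f_1(z)=f_1(x^*)$ and $f_2(z)<f_2(x^*)$, contradicting the choice of $x^*$. Moreover, $x^*$ can only be removed by an offspring that is lexicographically no larger, so $g$ never increases. The same $w'$-bijection also proves your two-phase claim. Each swap has $w_1(e)\le w_1(\alpha(e))$, and between two $w_1$-minimal trees the $w_1$-changes sum to $0$, so every swap is $w_1$-neutral.

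Two smaller points. First, compute the drift by summing over the $k$ disjoint 2-bit flips: $\sum_i \Delta_i/(em^2)=g/(em^2)$. Averaging over a random exchange and then charging the probability of one specific flip would lose a factor $n-1$, which is not lower order. Second, drop the hedge for non-integer weights. For real weights there is no lower bound on the smallest positive $g$ in terms of $n$ and $w_{\max}$. With edge weights $1$, $1+\varepsilon$, $2$, the tree weights $(n-1)(1+\varepsilon)$ and $n$ differ by $(n-1)\varepsilon-1$, which can be positive and arbitrarily small while $w_{\max}=2$. So the $\log w_{\max}$ form of the bound presupposes integer weights, as in Neumann's setting.
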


Throughout this paper, we assume that the two edge weight
functions $w_1 \colon E \rightarrow W$ and $w_2 \colon E \rightarrow
W$ take on values from a small set of weight values $W \subset \mathbb{R}_{>0}$.
In particular, we are studying the case where $W$ only contains $2$ or $3$ different weights. Note that the settings that we consider in this paper still allow for a Pareto front that has a linear number of objective vectors with respect to the number of nodes $n$.

\section{General Structural Properties}
\label{sec3}

We now investigate some structural properties of Pareto-optimal solutions and assume that each of the two weight functions maps to a small set of weights $W \subset \mathbb{R}_{>0}$. 
\begin{lemma} \label{lem:fix_edges}
Assume that each edge weight is chosen from $\mathbb{R}_{>0}^2$. Then, the values $x_{w_{\min}}(T)$ and $x_{w_{\max}}(T)$ are identical for all Pareto-optimal spanning trees $T$.
\end{lemma}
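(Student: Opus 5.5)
The plan is to show that both quantities are forced to extremal values that depend only on $G$ and the weights, not on $T$. Let $E_{\min}\subseteq E$ be the set of edges with weight vector $(w_{\min},w_{\min})$ and $E_{\max}\subseteq E$ the set of edges with weight vector $(w_{\max},w_{\max})$. Write $c(H)$ for the number of connected components of a spanning subgraph $H$ of $G$. I will prove that every Pareto-optimal spanning tree $T$ satisfies
\[
x_{w_{\min}}(T) = n - c\bigl((V,E_{\min})\bigr), \qquad x_{w_{\max}}(T) = c\bigl((V,E\setminus E_{\max})\bigr) - 1 .
\]
The first value is the size of a spanning forest of $(V,E_{\min})$. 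The second is the least number of $E_{\max}$-edges a spanning tree of $G$ can have. Both right-hand sides are independent of $T$, which gives the lemma. The only tool is an edge exchange $T' = T - f + e$, where $e\notin T$ and $f$ lies on the unique cycle of $T+e$. I will show that this exchange produces a tree dominating $T$.

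\textbf{The $w_{\min}$ part.} The forest $T\cap E_{\min}$ always has at most $n - c((V,E_{\min}))$ edges. Suppose it has fewer. Then it is not a maximal forest in the graphic matroid of $(V,E_{\min})$. By the augmentation property there is an edge $e\in E_{\min}\setminus T$ such that $(T\cap E_{\min})+e$ is still acyclic. Adding $e$ to $T$ closes a unique cycle $C$. The cycle $C$ cannot lie entirely in $E_{\min}$, since otherwise $(T\cap E_{\min})+e$ would contain a cycle. So $C$ contains an edge $f\in T\setminus E_{\min}$. Because $w_{\min}$ is the minimum over both weight functions, $w_i(e)=w_{\min}\le w_i(f)$ holds for $i=1,2$. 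Because $f\notin E_{\min}$, at least one of these inequalities is strict. Hence $T-f+e$ is a spanning tree dominating $T$, contradicting Pareto-optimality.

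\textbf{The $w_{\max}$ part.} This is symmetric. Set $E' = E\setminus E_{\max}$. The forest $T\cap E'$ has at most $n-c((V,E'))$ edges, so $x_{w_{\max}}(T)\ge c((V,E'))-1$. If the inequality is strict, augmentation gives an edge $e\in E'\setminus T$ with $(T\cap E')+e$ acyclic. The cycle of $T+e$ must then contain some $f\in T\cap E_{\max}$. We have $w_i(e)\le w_{\max}=w_i(f)$ for both $i$, with strict inequality for at least one $i$ because $e\notin E_{\max}$. So again $T-f+e$ dominates $T$.

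The main point needing care is choosing the exchanged pair so that the removed edge $f$ is outside $E_{\min}$ (respectively, inside $E_{\max}$). A naive swap might only trade one $E_{\min}$ edge for another. Invoking the matroid augmentation property, so that $e$ extends the restricted forest, is what forces the cycle to leave the class and makes the swap strictly improving. I will also note that the hypothesis should be read as the weights coming from $W^2$. The argument itself only uses that $w_{\min}$ and $w_{\max}$ are the global extremes over both objectives.
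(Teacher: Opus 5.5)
Your proof is correct, and it takes a different route from the paper's.

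The paper argues pairwise. Given two Pareto-optimal trees $T,T'$ with $x_{w_{\min}}(T)<x_{w_{\min}}(T')$, it takes a sequence of edge exchanges transforming $T$ into $T'$ and notes that one of them must bring in a $(w_{\min},w_{\min})$ edge in place of an edge of another type. It then concludes that the resulting tree dominates $T$, and treats $(w_{\max},w_{\max})$ symmetrically.

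You instead compare each Pareto-optimal tree against a $T$-independent benchmark. For $w_{\min}$ this is the rank $n-c((V,E_{\min}))$ of the graphic matroid on $E_{\min}$; for $w_{\max}$ it is the rank of $E\setminus E_{\max}$. Augmentation then gives an edge $e$ whose fundamental cycle in $T$ is forced to leave the class, so the improving swap $T-f+e$ is performed on $T$ itself.

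This buys two things. First, it supplies a step the paper leaves implicit. An exchange in the middle of a sequence from $T$ to $T'$ acts on an intermediate tree, and the same swap need not be feasible in $T$, because the removed edge need not lie on the cycle of $T+e$. Your augmentation argument, applied to $T\cap E_{\min}$ and $T'\cap E_{\min}$, is exactly what guarantees a count-increasing exchange applicable to $T$ directly. Second, you get a stronger conclusion: explicit values showing that every Pareto-optimal tree has the maximum possible number of $(w_{\min},w_{\min})$ edges and the minimum possible number of $(w_{\max},w_{\max})$ edges among all spanning trees.

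The paper's version is shorter and avoids matroid language, but it needs your augmentation step to be complete. Your closing remark about reading the hypothesis as weights from $W^2$ is unnecessary, since, as you observe, the argument works verbatim for arbitrary positive weights.
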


\begin{proof}
%
\textbf{Edges with weight \boldsymbol{$(w_{\min}, w_{\min})$}:}
Suppose there exist two Pareto-optimal spanning trees $T$ and $T'$ such that
$x_{w_{\min}}(T) < x_{w_{\min}}(T')$, and let
$k := |T \setminus T'|$. Then there exists a sequence of $k$ edge exchanges that transforms $T$ into $T'$. In particular, at least one of these exchanges must increase the number of edges of type $(w_{\min}, w_{\min})$ with respect to $T$.

The weight $(w_{\min}, w_{\min})$ dominates every other weight $(a,b) \neq (w_{\min}, w_{\min})$, since
$w_{\min} \leq a$  and  $w_{\min} \leq b$,
and at least one of these inequalities is strict whenever $(a,b) \neq (w_{\min}, w_{\min})$.

Therefore, replacing an edge of weight $(a,b) \neq (w_{\min}, w_{\min})$ by an edge of weight $(w_{\min}, w_{\min})$ strictly improves at least one objective while not worsening the other. Hence, the resulting spanning tree dominates $T$, contradicting the Pareto optimality of $T$.
It follows that all Pareto-optimal spanning trees contain the same number of edges of type $(w_{\min}, w_{\min})$. 

\textbf{Edges with weight \boldsymbol{$(w_{\max}, w_{\max})$}:}
Suppose there exist two Pareto-optimal spanning trees $T$ and $T'$ such that
$x_{w_{\max}}(T) > x_{w_{\max}}(T')$, and
let $k := |T \setminus T'|$. Then there exists a sequence of $k$ edge exchanges that transforms $T$ into $T'$. In particular, at least one of these exchanges must decrease the number of edges of type $(w_{\max}, w_{\max})$ with respect to $T$.

Every weight $(a,b) \neq (w_{\max}, w_{\max})$ dominates $(w_{\max}, w_{\max})$, since
$a \leq w_{\max}$ and $b \leq w_{\max}$,
and at least one of these inequalities is strict whenever $(a,b) \neq (w_{\max}, w_{\max})$.

Therefore, replacing an edge of weight $(w_{\max}, w_{\max})$ by an edge of weight $(a,b) \neq (w_{\max}, w_{\max})$ strictly improves at least one objective while not worsening the other. Hence, the resulting spanning tree dominates $T$, contradicting the Pareto-optimality of $T$.
It follows that all Pareto-optimal spanning trees contain exactly the
same number of $(w_{\max}, w_{\max})$ edges, and therefore $x_{w_{\max}}(T)$ is identical for all Pareto-optimal trees. \qed
\end{proof}

\section{Two Distinct Weight Values}
\label{sec4}

In this section, we study the performance of \gsemo on the bi-MST
problem where each edge weight can take one of two values $\{a,b\}
\subseteq \mathbb{R}_{>0}$, with $a < b$, for each objective. 
We first analyze the structure of the Pareto front and then the runtime of \gsemo in this setting.

\begin{lemma} \label{lem:line_12}
For the bi-MST problem with edge weights from $\{a,b\}$, the $\mathrm{conv}(\mathcal{F})$ is a line segment. 
\end{lemma}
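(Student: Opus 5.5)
With weights from $\{a,b\}$ there are only four edge types: $(a,a)$, $(a,b)$, $(b,a)$ and $(b,b)$. The plan is to show that every Pareto-optimal objective vector lies on one line of slope $-1$. For a spanning tree $T$ we write $f(T)$ in terms of the counts $x_{aa},x_{ab},x_{ba},x_{bb}$, which satisfy $x_{aa}+x_{ab}+x_{ba}+x_{bb}=n-1$. Then
\[
f_1(T)=a(x_{aa}+x_{ab})+b(x_{ba}+x_{bb}),\qquad f_2(T)=a(x_{aa}+x_{ba})+b(x_{ab}+x_{bb}),
\]
and hence
\[
f_1(T)+f_2(T)=2a\,x_{aa}+(a+b)(x_{ab}+x_{ba})+2b\,x_{bb}.
\]
Substituting $x_{ab}+x_{ba}=n-1-x_{aa}-x_{bb}$ gives
\[
f_1(T)+f_2(T)=(a+b)(n-1)+(a-b)\,x_{aa}+(b-a)\,x_{bb}.
\]
So the sum of the two objectives depends only on $x_{aa}$ and $x_{bb}$.

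Here $(a,a)=(w_{\min},w_{\min})$ and $(b,b)=(w_{\max},w_{\max})$; this uses $w_{\min}=a$ and $w_{\max}=b$, and if some weight value never occurs the problem only degenerates further. By Lemma~\ref{lem:fix_edges}, $x_{aa}(T)$ and $x_{bb}(T)$ take the same values for all Pareto-optimal trees $T$. Therefore $f_1(T)+f_2(T)=C$ for one constant $C$ and every Pareto-optimal $T$, so all points of $\mathcal{F}$ lie on the line $\{(z_1,z_2): z_1+z_2=C\}$.

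The convex hull of finitely many points on a line is the segment between the two extreme points, namely those with minimal and maximal $f_1$. By Pareto optimality these are $q_1$ and $q_r$. Hence $\mathrm{conv}(\mathcal{F})$ is a line segment, which degenerates to a point if $|\mathcal{F}|=1$.

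The main obstacle is the appeal to Lemma~\ref{lem:fix_edges}. Its statement is about counts of extreme-type edges, and it must apply here even though the weights are now restricted to $\{a,b\}$. This is fine, since the restricted case is a special case of the lemma. Apart from that, the argument is just the algebraic identity above.
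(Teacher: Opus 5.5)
Your proof is correct and follows essentially the same route as the paper. Both use Lemma~\ref{lem:fix_edges} to fix $x_{aa}$ and $x_{bb}$ across all Pareto-optimal trees and conclude that $f_1(T)+f_2(T)$ is constant, so $\mathcal{F}$ lies on a line of slope $-1$; the differences are minor (you compute the sum directly instead of first eliminating $x_{ba}$, and you spell out what the paper leaves implicit, namely that the convex hull of finitely many collinear points is a possibly degenerate segment).
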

\begin{proof}
Let $T$ be a Pareto-optimal spanning tree.
Since $T$ is a tree,
$x_{aa} + x_{ab} + x_{ba} + x_{bb} = n-1$.
By Lemma~\ref{lem:fix_edges}, the number of edges of type $(a,a)$ and
$(b,b)$ is fixed for all Pareto-optimal spanning trees. Hence $x_{aa}$
and $x_{bb}$ are constants, therefore there exists a value $k$ (fixed for the Pareto front) such that $x_{ab} + x_{ba} = k$.
The fitness values of $T$ are
\[
f_1(T) = ax_{aa} + ax_{ab} + bx_{ba} + bx_{bb}, 
\qquad
f_2(T) = ax_{aa} + bx_{ab} + ax_{ba} + bx_{bb}.
\]
Since $x_{ba}=k-x_{ab}$, we obtain\\
\[
f_1(T) =  ax_{aa} + bx_{bb} + bk - (b-a)x_{ab},
\qquad
f_2(T) = ax_{aa} + bx_{bb} + ak + (b-a)x_{ab},
\]
therefore 
$f_1(T) + f_2(T) = 2ax_{aa} + 2bx_{bb} + (a+b)k$,
since $x_{aa}$, $x_{bb}$ and $k$ are constant over all Pareto-optimal trees, the sum $f_1(T) + f_2(T)$ is constant. So all Pareto points lie on a line of slope $-1$.\qed
\end{proof}

\subsection{Analysis of \gsemo}

We now analyze the behavior of \gsemo in the two-weight setting and derive bounds on its population size and runtime.

\begin{lemma}
\label{lem:2w}
  For the bi-MST problem in which each edge weight is taken from $\{a,b\}$, the population size of \gsemo after it has obtained a spanning tree for the first time is at most $n-1$.
\end{lemma}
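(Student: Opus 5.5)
Once \gsemo has obtained a spanning tree, the penalty terms guarantee that every individual it ever accepts afterwards is a spanning tree. Any non-tree has fitness at least $w_{ub}$ larger in both objectives than any tree, since $w_{ub} = n^2 w_{\max}$ exceeds the difference in weight between any two edge sets of size at most $m$. Such a non-tree is therefore dominated by the tree already in $P$. So from that point on the population consists only of spanning trees, and we only need to count how many pairwise non-dominated objective vectors spanning trees can have.

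For a spanning tree $T$ with weights in $\{a,b\}$ we have $f_1(T) = a(n-1) + (b-a)(x_{ba}+x_{bb})$, since only edges whose first weight is $b$ contribute the surplus $b-a$. Hence $f_1(T)$ is determined by the integer $y_1 := x_{ba}(T)+x_{bb}(T) \in \{0,\dots,n-1\}$. Similarly, $f_2(T) = a(n-1) + (b-a)y_2$ with $y_2 := x_{ab}(T)+x_{bb}(T) \in \{0,\dots,n-1\}$. Since $b > a$, the map from $(y_1,y_2)$ to objective vectors is strictly monotone in each coordinate. Consequently, two trees with the same $f_1$-value are comparable, and the one with smaller $f_2$ weakly dominates the other. \gsemo never keeps two individuals with the same $f_1$ value: a newcomer with equal $f_1$ either is dominated (and rejected) or weakly dominates the old individual and replaces it. So the population holds at most one individual for each value of $y_1$.

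This gives at most $n$ individuals, one short of the claim, and closing that gap is the step I expect to need care. I would use that the population is an antichain in $(y_1,y_2)$: sorted by increasing $y_1$, the values $y_2$ strictly decrease. I would then rule out the combination $y_1=0$ and $y_1=n-1$ co-occurring with a full set of intermediate values. The strict decrease forces $y_2$ to run from $n-1$ down to $0$ over $n$ individuals, that is, $y_1+y_2=n-1$ at every point. In particular, the point with $y_1=0$ would need $y_2=n-1$, so all edges of that tree are of type $(a,b)$. The point with $y_1=n-1$ would need $y_2=0$, so all edges of that tree are of type $(b,a)$. The middle points would have $x_{bb}=x_{aa}$ by the relation $y_1+y_2 = x_{ab}+x_{ba}+2x_{bb} = n-1-x_{aa}+x_{bb}$.

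I would try to derive a contradiction from this. If that fails, the fallback is to observe that the bound intended is likely via $x_{ab}+x_{ba}=k\le n-1$ giving $k+1\le n$ values. I would then argue $n-1$ by showing that a full range of $n$ points cannot occur for some degenerate reason, or by accepting $y_1\in\{0,\dots,n-1\}$ with one value excluded by connectivity of the counting. I expect this off-by-one step to be the main obstacle; the structural part above is routine.
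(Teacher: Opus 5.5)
Your first two paragraphs are correct and are essentially the paper's argument. You show $f_1(T)=a(n-1)+(b-a)y_1$ with $y_1\in\{0,\dots,n-1\}$, and that GSEMO never keeps two individuals with the same $f_1$-value. This gives $|P|\le n$.

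The gap is the last step, and it cannot be closed, because the statement is false for every $n\ge 4$. The configuration you hope to refute actually occurs. Take $K_4$ on $\{1,2,3,4\}$ and write $ij$ for the edge $\{i,j\}$. Give the path edges $12,23,34$ weight $(a,b)$, and the complementary path edges $13,14,24$ weight $(b,a)$. Then:
\begin{itemize}
\item $\{12,23,34\}$ has objective vector $(3a,3b)$;
\item $\{12,23,14\}$ has $(2a+b,a+2b)$;
\item $\{12,13,14\}$ has $(a+2b,2a+b)$;
\item $\{13,14,24\}$ has $(3b,3a)$.
\end{itemize}
Every spanning tree of this instance satisfies $f_1+f_2=3(a+b)$, so no tree dominates another. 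All four vectors are therefore Pareto-optimal and pairwise incomparable. Once GSEMO has covered the front, its population has size $4=n>n-1$.

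The same construction works for every $n\ge 4$. Label two edge-disjoint spanning trees of $K_n$ with $(a,b)$ and $(b,a)$ respectively, and give the remaining edges either of these two types. A sequence of single edge exchanges from one of these trees to the other changes the number of $(b,a)$-edges by at most one per step. Hence all $n$ points with $y_1+y_2=n-1$ are attained, and all are Pareto-optimal. The only obstruction to your configuration is that it needs two edge-disjoint spanning trees, i.e.\ $2(n-1)\le\binom{n}{2}$ edges. That rules out only $n\in\{2,3\}$, so your fallback ideas in the last paragraph have nothing to work with.

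The paper's own proof contains exactly this off-by-one. It sets $W^*=\{ka+(n-1-k)b \mid 0\le k\le n-1\}$, which has exactly $n$ elements since $a<b$, and then asserts $|W^*|\le n-1$. The paper's own data agree with the counterexample. Table~\ref{tab:12} reports Pareto sets of size exactly $n$ ($20$, $50$, $100$) for $p=100$. The subsequent theorem also allows $r\le n$ front points.

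So the provable, and tight, statement is ``at most $n$'', which is precisely what your first two paragraphs establish. That bound is all the following $O(m^2 n\log n)$ theorem needs, since its proof only uses $|P|_{\max}=O(n)$ and a parent-selection probability of at least $1/n$. State and prove the bound $n$ instead of trying to force $n-1$.
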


\begin{proof}
We assume that a spanning tree has already been obtained which implies
that the whole population consists only of spanning trees, as an
arbitrary spanning tree strictly dominates all solutions that are not spanning trees. Each edge can only take weights in $\{a,b\}$. Consider a solution that has $k$ edges of weight $a$ according to the first weight function $w_1$ and therefore $(n-1)-k$ edges of weight $b$. The total weight according to $w_1$ is therefore $k\cdot a + (n-1-k) \cdot b$. Considering all possible spanning trees, $w_1$ can only take on values in $W^*=\{k\cdot a + (n-1-k) \cdot b \mid 0 \leq k \leq n-1\}$. We have $|W^*| \leq n-1$ and as \gsemo maintains for each fixed value in $W^*$ at most one solution due to dominance, the maximum population size of \gsemo once it has obtained a spanning tree is at most $n-1$. \qed
\end{proof}
\begin{theorem}
For the bi-MST problem in which each edge weight is taken from $\{a,b\}$ with $a<b$, the expected time of \gsemo to obtain the entire Pareto front is $O(m^2 n \log n)$.
\end{theorem}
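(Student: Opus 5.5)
The proof runs in three phases, each building on the earlier lemmas. First, by Lemma~\ref{lem:neumann}, \gsemo reaches a population consisting only of spanning trees in expected time $O(m\log n)$. From then on, Lemma~\ref{lem:2w} gives $|P|_{\max}\le n-1$. Second, we apply Lemma~\ref{lem:neumann_extreme_points} with $|P|_{\max}=O(n)$: within expected time $O(m^2 n(\log n+\log w_{\max}))$ the population contains spanning trees with objective vectors $q_1$ and $q_r$. Here $w_{\max}=b$ is a constant independent of $n$, because only two weight values occur. The $\log w_{\max}$ term is therefore absorbed, giving $O(m^2 n\log n)$. Any point in the population that is Pareto optimal is never lost afterwards, since only dominated solutions are removed and a Pareto-optimal vector can only be replaced by a solution with the same vector.

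Third, we fill in the Pareto front between $q_1$ and $q_r$. By the proof of Lemma~\ref{lem:line_12}, every Pareto-optimal vector has the form $(C+bk-(b-a)x_{ab},\; C+ak+(b-a)x_{ab})$, with $x_{aa},x_{bb},k$ fixed. So the Pareto front is a set of points on a line of slope $-1$ indexed by $x_{ab}$, and there are at most $n$ of them. The key structural claim I would prove is a \emph{connectivity} property.

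\emph{Claim.} For every Pareto-optimal tree $T$ that is not a minimizer of $f_2$, there is a single edge exchange (one edge in, one edge out) turning $T$ into a Pareto-optimal tree $T'$ whose value $x_{ab}$ is one smaller, that is, the next point along the line towards $q_r$.

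To prove the claim, take $T^*$ realizing $q_r$ (or any Pareto-optimal tree with smaller $x_{ab}$), and use the exchange property between $T$ and $T^*$. Because $x_{aa}$, $x_{bb}$ and $x_{ab}+x_{ba}$ are invariant across Pareto-optimal trees, some exchange must remove an $(a,b)$-edge and insert a $(b,a)$-edge. Such an exchange changes the objective vector by $(+(b-a),-(b-a))$, so it stays on the line. We then argue that the result is still Pareto optimal: any tree dominating it would contradict the fixed sum $f_1+f_2$ being the minimum attainable, because every Pareto-optimal point lies on this line and the line's sum value is the minimum of $f_1+f_2$ over all spanning trees.

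With the claim in hand, the runtime follows by a fitness-level argument. While some Pareto point is missing, choose the population member at a Pareto point adjacent to a missing one. Such a member exists by induction, starting from $q_1$ and moving toward $q_r$. It is selected with probability at least $1/(n-1)$, and the specific 2-bit flip occurs with probability at least $1/(em^2)$. Each new point thus takes expected time $O(m^2 n)$. With at most $n$ points this gives $O(m^2n^2)$, which is too weak.

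To reach $O(m^2 n\log n)$, I would instead note that the whole range of points is generated by improving the $f_2$-extreme direction. A better route is to start from both extremes and exploit that many swaps are available. I would use a multiplicative argument: from the current boundary Pareto tree with $x_{ab}=j$, the number of improving $(a,b)\to(b,a)$ exchanges toward $T^*$ is at least proportional to $j-x_{ab}(T^*)$, by a matching between the symmetric difference edges. Summing the waiting times $O(m^2 n/i)$ over $i$ then gives a harmonic sum and $O(m^2 n\log n)$.

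\textbf{Main obstacle.} The hard step is establishing this counting of disjoint valid exchanges (via the bijective exchange property for spanning trees) while keeping the generated trees Pareto optimal. If that fails, I would fall back on the $O(m^2 n^2)$ bound.
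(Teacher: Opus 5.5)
\textbf{The counting step is missing.} Your route is the paper's route. You use Lemma~\ref{lem:neumann}, then $|P|_{\max}\le n-1$ from Lemma~\ref{lem:2w}, then Lemma~\ref{lem:neumann_extreme_points} for $q_1,q_r$. You finish with a fitness-level sweep from $q_1$ towards $q_r$: once $q_1,\dots,q_i$ are present, at least $r-i$ distinct swaps of an $(a,b)$-edge for a $(b,a)$-edge in the tree at $q_i$ produce $q_{i+1}$, so the waiting times $e m^2 n/(r-i)$ sum to $O(m^2 n\log n)$. The paper simply asserts this count (``there are $r-i$ such exchange operations''). You correctly isolate it as the crux, but you leave it unproven and only commit to the $O(m^2n^2)$ fallback. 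As written, your argument therefore does not reach the stated bound.

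Your justification of the Claim also fails as stated. Invariance of $x_{aa}$, $x_{bb}$ and $x_{ab}+x_{ba}$ only shows that \emph{some} step in a sequence of exchanges from $T$ to $T^*$ trades an $(a,b)$-edge for a $(b,a)$-edge. The intermediate trees change and need not be Pareto optimal, so this gives no swap applicable to $T$ itself.

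\textbf{How to close it.} Apply the bijective exchange property directly to $T$ and let the Pareto optimality of $T$ force the matching.
\begin{itemize}
\item By Brualdi's bijective basis exchange, for spanning trees $T,T^*$ there is a bijection $\sigma\colon T\setminus T^*\to T^*\setminus T$ such that $T-e+\sigma(e)$ is a spanning tree for every $e\in T\setminus T^*$.
\item Take $T$ at $q_i$ and $T^*$ at $q_r$, with $j=x_{ab}(T)$ and $j^*=x_{ab}(T^*)$. Since $x_{ab}$ strictly decreases along the front, $j-j^*\ge r-i$.
\item Because $T$ is Pareto optimal, $w(\sigma(e))\prec w(e)$ never occurs. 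So every $(b,b)$-edge maps to a $(b,b)$-edge, and no $(a,b)$- or $(b,a)$-edge maps to an $(a,a)$-edge.
\item By Lemma~\ref{lem:fix_edges}, $T\setminus T^*$ and $T^*\setminus T$ contain equally many $(b,b)$-edges and equally many $(a,a)$-edges. Hence $\sigma$ matches $(b,b)$ with $(b,b)$, then necessarily $(a,a)$ with $(a,a)$, and therefore the mixed edges of $T\setminus T^*$ bijectively with the mixed edges of $T^*\setminus T$.
\item $T\setminus T^*$ has exactly $j-j^*$ more $(a,b)$-edges than $T^*\setminus T$. So at least $j-j^*\ge r-i$ of them map to $(b,a)$-edges, and these swaps are distinct because $\sigma$ is injective.
\item Each such swap moves $f$ by $(b-a,-(b-a))$, so $f_1+f_2$ stays at its minimum over all spanning trees. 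This holds because every tree is weakly dominated by a Pareto-optimal one, and all of these lie on the line of Lemma~\ref{lem:line_12}.
\item A tree attaining this minimum cannot be dominated, so the result is Pareto optimal. Since $f_1$-values on the front are spaced by multiples of $b-a$, its vector is exactly $q_{i+1}$.
\end{itemize}
This proves your Claim and supplies exactly the $r-i$ count your harmonic sum needs.
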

\begin{proof}
By Lemma~\ref{lem:neumann}, \gsemo reaches a population $P$ consisting only of spanning trees in expected time $O(m \log n)$.

According to Lemma~\ref{lem:2w}, the maximum population size of \gsemo once it has obtained a spanning tree is at most $n-1$.
Hence, by Lemma~\ref{lem:neumann_extreme_points}, \gsemo obtains spanning trees corresponding to the extreme Pareto-optimal objective vectors $q_1$ and $q_r$ in expected time $O(m^2 n \log n)$.

Let the Pareto front consist of the objective vectors $q_1, q_2, \ldots, q_r$ with $r \leq n$, ordered by increasing $f_1$-value. Suppose that the current population of \gsemo already contains spanning trees corresponding to $q_1, \dots, q_i$, for $1 \leq i < r$. To obtain a spanning tree corresponding to $q_{i+1}$, Lemma~\ref{lem:line_12} implies that one edge of type $(a,b)$ must be exchanged with an edge of type $(b,a)$. There are $r-i$ such exchange operations that can be applied to the solution of $q_i$ in order to obtain a solution with objective vector $q_{i+1}$.

The probability of performing one of these $r-i$ exchange operations is at least $\frac{1}{e} \cdot \frac{r-i}{m^2 n}$. Summing over the possible values of $i$, the expected time to obtain all Pareto-optimal objective vectors can be bounded by
\begin{align*}
e m^2 n \sum_{i=1}^{r-1} \frac{1}{r-i}
= O\!\left(m^2 n \sum_{i=1}^{r-1} \frac{1}{r-i}\right)
= O(m^2 n \log r)
= O(m^2 n \log n). \tag*{\qed} 
\end{align*}

\end{proof}

\subsection{Experimental Results}
\label{sec:experimental_12}
We now present experimental results to evaluate the performance of GSEMO on instances with two distinct edge weights.

All experiments were performed on a virtual server at the University of Lille, which has an Intel(R) Xeon(R) Silver 4114 @ 2.20 GHz processor and 190 GB of RAM.

Table~\ref{tab:12} reports the experimental results for $a=1$ and $b=2$ across different values of $n$ and $p$. Here, $n$ denotes the number of vertices of the graph. 
The parameter $p$ represents the percentage of edges assigned mixed weights $(1,2)$ and $(2,1)$, while the remaining edges have weights $(1,1)$ or $(2,2)$.
For each parameter setting, the results are averaged over 30 randomly generated complete graphs. 
We report three performance metrics. The metric \textit{size} denotes the cardinality of the Pareto-optimal set, \textit{iter} represents the number of iterations required to identify all Pareto-optimal solutions, and \textit{runtime} corresponds to the execution time in seconds.

The results show that the performance of \gsemo strongly depends on both $p$ and $n$. 
For small values of $p$, the Pareto-optimal set consists of a single objective vector. 
This is due to the existence of a spanning tree in which all edges have weight $(1,1)$, yielding a unique optimal solution for both objectives. 
As $p$ increases, edges with mixed weights $(1,2)$ and $(2,1)$ become more frequent, introducing a trade-off between the objectives and leading to a larger Pareto-optimal set.
\begin{table}[t]
\centering
\caption{Performance of \gsemo with edge weights $\{1,2\}$. Iteration values are scaled by $10^6$, and runtime values are given in seconds. Values in parentheses denote standard deviations.}
\label{tab:12}
\resizebox{\textwidth}{!}{%
\begin{tabular}{
cc|c
>{\columncolor{gray!10}}c 
c 
>{\columncolor{gray!10}}c 
c 
>{\columncolor{gray!10}}c 
c 
>{\columncolor{gray!10}}c 
}

\toprule
$n$ & metric & \multicolumn{8}{c}{$p$} \\
\cmidrule(lr){3-10}
 &  & 50 & 60 & 70 & 80 & 90 & 94 & 98 & 100 \\
\midrule

\multirow{3}{*}{20}
& size
& $1.13\ [0.34]$ & $1.33\ [0.54]$ & $2.03\ [0.87]$ & $4.17\ [1.95]$
& $11.17\ [1.75]$ & $14.63\ [1.72]$ & $18.10\ [0.91]$ & $20.00\ [0.00]$ \\
& iter
& $0.086\ [0.049]$ & $0.174\ [0.133]$ & $0.186\ [0.116]$ & $0.260\ [0.113]$
& $0.430\ [0.338]$ & $0.470\ [0.374]$ & $0.530\ [0.306]$ & $0.529\ [0.346]$ \\
& runtime
& $0.40\ [0.23]$ & $0.80\ [0.60]$ & $0.86\ [0.51]$ & $1.21\ [0.52]$
& $2.00\ [1.49]$ & $12.02\ [1.60]$ & $2.31\ [1.34]$ & $2.55\ [1.69]$ \\

\midrule

\multirow{3}{*}{50}
& size
& $1.00\ [0.00]$ & $1.00\ [0.00]$ & $1.03\ [0.18]$ & $1.23\ [0.50]$
& $5.60\ [1.87]$ & $14.73\ [3.21]$ & $37.47\ [2.31]$ & $50.00\ [0.00]$ \\
& iter
& $1.40\ [0.60]$ & $2.02\ [0.99]$ & $3.27\ [1.71]$ & $7.03\ [3.08]$
& $14.67\ [4.22]$ & $20.00\ [6.89]$ & $21.53\ [6.95]$ & $21.68\ [9.57]$ \\
& runtime
& $39.45\ [16.76]$ & $55.93\ [27.77]$ & $87.24\ [42.47]$ & $181.61\ [78.37]$
& $383.82\ [121.91]$ & $531.10\ [178.14]$ & $580.36\ [207.96]$ & $627.71\ [282.79]$ \\

\midrule

\multirow{3}{*}{100}
& size
& $1.00\ [0.00]$ & $1.00\ [0.00]$ & $1.00\ [0.00]$ & $1.00\ [0.00]$
& $1.47\ [0.67]$ & $6.37\ [2.39]$ & $51.50\ [4.06]$ & $100.00\ [0.00]$ \\
& iter
& $13.52\ [6.04]$ & $19.28\ [5.34]$ & $30.61\ [12.03]$ & $64.07\ [30.62]$
& $182.19\ [57.62]$ & $300.09\ [64.58]$ & $433.48\ [133.82]$ & $568.51\ [184.62]$ \\
& runtime
& $1,366\ [628]$ & $1,971\ [574]$ & $3,164\ [1,335]$ & $6,459\ [3,023]$
& $18,614\ [6,060]$ & $28,602\ [5,901]$ & $44,613\ [12,849]$ & $58,483\ [19,334]$ \\

\bottomrule
\end{tabular}
}
\end{table}

\section{Three Distinct Weight Values: Equidistant Case}
\label{sec5}
In this section, we study the performance of \gsemo on the bi-MST problem in which each edge weight takes values from $\{a,b,c\} \subset \mathbb{R}_{>0}$ with $a < b < c$ and $b-a = c-b = \delta > 0$. 
We analyze both the structure of the Pareto front and the runtime of \gsemo in this setting.

Let $T$ be a Pareto-optimal spanning tree. If there existed a feasible exchange replacing an edge of type $(u,v)$ in $T$ by an edge of type $(u',v')$
where $(u',v') \dominates (u,v)$, then the resulting spanning tree $T'$ would
dominate $T$, contradicting the Pareto optimality of $T$. Hence any exchange between Pareto-optimal solutions must occur between non-dominated edge types and must represent a trade-off between the objectives $f_1$ and $f_2$.
An exchange of an edge of type $(u,v)$ with an edge of type $(u',v')$ is denoted by $(u,v) \leftrightarrow (u',v')$.
Such an exchange induces a change $(u'-u,\, v'-v)$ in the objective space.
Under the considered instance structure, the admissible Pareto-preserving exchanges
induce the following changes in the objective values:
\[
\begin{aligned}
(a,b) &\leftrightarrow (b,a) &&\rightarrow (\delta,-\delta), &&
(a,b) &\leftrightarrow (c,a) &&\rightarrow (2\delta,-\delta),\\
(a,c) &\leftrightarrow (b,a) &&\rightarrow (\delta,-2\delta), &&
(a,c) &\leftrightarrow (c,a) &&\rightarrow (2\delta,-2\delta).
\end{aligned}
\]
Thus, every Pareto-preserving exchange yields a change vector of the form
$(\delta,-\delta)$, $(\delta,-2\delta)$, $(2\delta,-\delta)$, $(2\delta,-2\delta)$.
In particular, the only distinct slopes that can occur are
$\frac{\Delta f_2}{\Delta f_1}\in\left\{-2,-1,-\tfrac12\right\}$.
\begin{lemma}\label{lem:breakpoints_123_eq}
For the equidistant three weight bi-MST problem, the efficient boundary of $\mathrm{conv}(\mathcal{F})$ has at most two interior breakpoints. 
\end{lemma}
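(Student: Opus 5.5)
The argument is geometric. The efficient boundary of $\mathrm{conv}(\mathcal{F})$ is a convex polygonal chain from $q_1$ to $q_r$. Its slopes $\Delta f_2/\Delta f_1$ increase strictly from one edge to the next, so it is enough to show two things: every edge of the chain has slope in $\{-2,-1,-\tfrac12\}$, and each of these slopes appears on at most one edge. Then the chain has at most three edges, and hence at most two interior breakpoints.

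\textbf{Step 1: consecutive hull vertices are linked by one exchange.} Let $p,p'$ be consecutive vertices of the efficient boundary. Each is a supported Pareto point, so it minimizes a weighted sum $\lambda_1 f_1 + \lambda_2 f_2$ with $\lambda_1,\lambda_2>0$. Take the weight $\lambda$ whose normal is perpendicular to the edge $[p,p']$. Then both endpoints are optimal for the scalarized MST problem with weights $\lambda_1 w_1 + \lambda_2 w_2$.

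The set of minimum spanning trees for a fixed weight function is connected under single edge exchanges, by the standard matroid basis-exchange property. So there is a sequence of trees from a tree realizing $p$ to one realizing $p'$ in which every intermediate tree is also an MST for $\lambda$. Each step of this sequence changes the objective vector by a vector orthogonal to $\lambda$, that is, a vector parallel to $p'-p$.

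\textbf{Step 2: the step vectors have the listed slopes.} Every intermediate tree minimizes a strictly positive weighted sum, so it is Pareto-optimal. Hence every step in the sequence is a Pareto-preserving exchange, or a zero move.

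For this I will use Lemma~\ref{lem:fix_edges} together with the argument preceding the statement. An exchange between two Pareto-optimal trees cannot replace an edge by a dominated or dominating edge type. The nonzero steps therefore have change vectors in $\{(\delta,-\delta),(\delta,-2\delta),(2\delta,-\delta),(2\delta,-2\delta)\}$, up to sign. The direction $p'-p$ is nonzero, so at least one step is nonzero. It follows that the slope of $[p,p']$ is one of $-2,-1,-\tfrac12$.

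\textbf{Step 3: count the edges.} Strict convexity of the chain means that no two edges share a slope. With only three admissible slopes, there are at most three edges, and so at most two interior breakpoints.

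\textbf{Main obstacle.} I expect Step~1 to be the hardest part, in particular a careful justification of the claim that the MSTs of a fixed weight function are connected by single exchanges that keep every intermediate tree optimal. This is the standard fact that the bases of minimum weight form a connected subgraph of the base exchange graph. I would cite it, or prove it briefly: given an MST $T$ with $T\neq T'$, there is $e\in T'\setminus T$ whose fundamental cycle in $T$ contains an edge $g\in T\setminus T'$ of equal $\lambda$-weight, so $T-g+e$ is again an MST and is closer to $T'$.

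A second point to check is that in the equidistant case the sign and type analysis above is exhaustive. This includes the exchanges involving $(b,b)$ or $(a,a)$-type edges, which are ruled out because those types dominate, or are dominated by, the other types involved.
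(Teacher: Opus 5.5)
Your argument is correct. Its last step coincides with the paper's: only the slopes $-2,-1,-\tfrac12$ are available, and slopes strictly increase along the efficient boundary, so there are at most three segments.

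The difference is in how the slope restriction is obtained. The paper infers it directly from the slopes of individual Pareto-preserving exchanges, which does not follow by itself. Trees realizing two consecutive hull vertices may differ in many edges, and a sum of exchange vectors can have a new slope, e.g.\ $(\delta,-2\delta)+(\delta,-\delta)=(2\delta,-3\delta)$. Your Step~1 supplies exactly this missing link:
\begin{itemize}
\item both endpoints of a hull edge minimize one strictly positive scalarization $w_\lambda=\lambda_1w_1+\lambda_2w_2$, where $\lambda$ itself is normal to the edge;
\item the minimizers of $w_\lambda$ are connected by single swaps: given $g\in T\setminus T'$, symmetric exchange yields $e\in T'\setminus T$ with $T-g+e$ and $T'-e+g$ both spanning trees, and optimality of $T,T'$ forces $w_\lambda(e)=w_\lambda(g)$;
\item every swap is orthogonal to $\lambda$, hence parallel to the edge.
\end{itemize}
So you pay one standard matroid fact, and in return the lemma is genuinely proved.

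Step~2 can be simplified. A swap between two minimizers of $w_\lambda$ has change vector $w(e)-w(g)$ with $\lambda\cdot(w(e)-w(g))=0$ and $\lambda>0$. So it is either zero or has entries of opposite sign, each in $\{\pm\delta,\pm2\delta\}$ here. This needs neither Lemma~\ref{lem:fix_edges} nor Pareto optimality of the intermediate trees, and it makes the case check automatic.

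That matters because your closing remark is wrong for $(b,b)$. It is incomparable with $(a,c)$ and $(c,a)$, so $(b,b)\leftrightarrow(a,c)$ is a legitimate swap; this is harmless, since its slope is $-1$. The paper's list of four exchanges is likewise not exhaustive.
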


\begin{proof}
$\mathrm{conv}(\mathcal{F})$ is a convex polygon in $\mathbb{R}^2$. Each of its line segments has a slope $s=\Delta f_2/\Delta f_1$ with $\Delta f_1>0$ and $\Delta f_2<0$.
By assumption, any Pareto-preserving move has slope in the set
$\{-2,-1,-\tfrac12\}$, so only these three slopes can appear on the boundary.

A standard property of planar convex polygons is that the slopes of consecutive
edges along the lower hull are monotonically increasing when traversed from left
to right. Hence, the boundary can switch between the three possible slopes at
most twice. Therefore, it consists of at most three line segments and has at most
two interior breakpoints. \qed
\end{proof}
\begin{lemma}
\label{lem:breakpoints_123}
For the equidistant three weight bi-MST problem, all Pareto-optimal objective vectors lie on the efficient boundary of $\mathrm{conv}(F)$.
\end{lemma}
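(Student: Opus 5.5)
The plan is to argue by contradiction, combining the exchange structure of spanning trees (the matroid base-exchange property) with the restricted set of change vectors listed just before Lemma~\ref{lem:breakpoints_123_eq}. Since all weights lie in $\{a,a+\delta,a+2\delta\}$ and every tree has $n-1$ edges, every objective vector of a spanning tree lies on the translated lattice $(n-1)a\cdot(1,1)+\delta\mathbb{Z}^2$. Normalising by $\delta$, I will work on the integer grid.

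\textbf{Step 1 (reduce to adjacent hull vertices).} By Lemma~\ref{lem:fix_edges}, $x_{aa}$ and $x_{cc}$ are the same for all Pareto-optimal trees. By Lemma~\ref{lem:breakpoints_123_eq}, the efficient boundary of $\mathrm{conv}(\mathcal{F})$ is a chain of at most three segments, with slopes in $\{-2,-1,-\tfrac12\}$ and in increasing order. Suppose some Pareto-optimal vector $p$ lies strictly above this boundary. Then there are consecutive hull vertices $u,v$ with $u_1<p_1<v_1$, and $p$ lies strictly above the segment $[u,v]$ of slope $s\in\{-2,-1,-\tfrac12\}$. Fix trees $T_u$ and $T_p$ attaining $u$ and $p$.

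\textbf{Step 2 (exchange sequence).} Next I will invoke the symmetric exchange property of spanning trees. From $T_p$ toward $T_u$ there is a sequence of single-edge exchanges, each keeping a spanning tree, that turns $T_p$ into $T_u$. Exchanges can be paired so that each intermediate tree lies between the two objective vectors componentwise in the ``exchange-count'' sense; concretely, the multiset of type-count vectors reachable is M-convex. The point of this step is to obtain a tree $T'$ differing from $T_p$ by one exchange, with $f(T')-f(T_p)$ pointing ``toward $u$'' (negative $f_1$-change).

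If this exchange vector is one of the Pareto-preserving trade-off vectors $(-\delta,\delta),(-\delta,2\delta),(-2\delta,\delta),(-2\delta,2\delta)$, then $T'$ is obtained by moving along one of the admissible slopes. Because $p$ is strictly above $[u,v]$ and lies on the grid, moving from $p$ with slope at least as steep as $s$, or simply applying the exchange that reduces $f_2$ in combination with the dual exchange toward $v$, produces a point. I will then show that this point weakly dominates $p$ with a strict improvement, or at least is no worse in both coordinates. This case analysis over the three possible values of $s$ is purely arithmetic on the grid. For each $s$, a point strictly above the segment of slope $s$ through grid points must be at vertical distance at least $\delta/2$ or $\delta$. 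Therefore some combination of one step toward $u$ and one step toward $v$ (each with admissible vectors) lands weakly below-left of $p$, contradicting Pareto optimality.

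\textbf{Main obstacle.} The hard part is Step 2, and specifically two issues. First, the intermediate trees on the exchange path need not be Pareto-optimal. Second, exchanges involving $b$-types such as $(b,b)\leftrightarrow(a,c)$, or $(b,c)$, $(c,b)$, are not among the four listed vectors. My first attempt will use the M-convex (discrete convex) structure of the type-count vectors of spanning trees. The projection of the spanning-tree base polytope onto the counts of each edge type is an integral polymatroid-like set, and its integer points are exactly the projections of bases. Hence every lattice point of $\mathrm{conv}$ of these count vectors is realised by a tree. Mapping linearly to objective space and using equidistance, I then need that the image of the lattice points of this polytope covers all grid points of $\mathrm{conv}(\mathcal{F})$ near its lower boundary. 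Once that holds, any $p$ strictly above the boundary is dominated by a realisable grid point on or just below it.

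Any exchange that is not a trade-off produces a dominating or dominated tree. Such exchanges therefore either directly contradict the optimality of $T_p$ (or of $T_u$), or can be discarded from the path. I expect that making this covering argument rigorous, especially for the $(b,b)$ edges, is where most of the work lies.
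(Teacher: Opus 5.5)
\textbf{There is a genuine gap: Step~2 carries the whole proof, and neither mechanism you offer for it works.}

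First, the closing move, that ``some combination of one step toward $u$ and one step toward $v$'' lands below-left of $p$, adds two single exchanges available at $T_p$ as if they could be performed simultaneously. In general they cannot: $T_p-f+e$ and $T_p-f'+e'$ being spanning trees does not make $T_p-f-f'+e+e'$ a spanning tree. Nor does lying above one segment $[u,v]$ tell you which slopes the exchanges at $T_p$ have. ``Discarding'' non-trade-off exchanges from an exchange path is also not legitimate, since later exchanges may depend on them.

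Second, your fallback claim, that the objective-space image covers all grid points of $\mathrm{conv}(\mathcal{F})$ near its lower boundary, is false. M-convexity does make the type-count vectors hole-free, but the linear map to $(f_1,f_2)$ destroys this. For a triangle with weights $(a,c)$, $(c,a)$, $(a,a)$ one gets $\mathcal{F}=\{(2a,2a+2\delta),(2a+2\delta,2a)\}$. The grid point $(2a+\delta,2a+\delta)$ lies on the efficient boundary but is the image of no tree, because the exchange $(a,c)\leftrightarrow(c,a)$ jumps over it. So ``$p$ is above the boundary, hence a realised grid point dominates it'' is essentially the lemma itself, not a route to it. The $b$-types are not the obstacle: any trade-off exchange has change vector in $\pm\{(\delta,-\delta),(\delta,-2\delta),(2\delta,-\delta),(2\delta,-2\delta)\}$, whatever types are involved.

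\textbf{Comparison with the paper, and the missing step.} The paper argues phase by phase along the boundary (slopes $-2,-1,-\tfrac12$), asserting that a Pareto point off a phase line is dominated by points on that line. A rigorous replacement for your Step~2 is local and needs neither paths nor covering. It runs as follows.
\begin{enumerate}
\item A tree $T$ minimises $w_\lambda=\lambda w_1+(1-\lambda)w_2$ iff no single exchange lowers $w_\lambda$.
\item For Pareto-optimal $T$, each exchange either weakly worsens both objectives (no constraint on $\lambda$) or is a trade-off. A trade-off imposes $\lambda\ge L$ if it lowers $f_2$, or $\lambda\le U$ if it lowers $f_1$, with $L,U\in\{\tfrac13,\tfrac12,\tfrac23\}$.
\item If $f(T)$ is off the efficient boundary, no $\lambda\in[0,1]$ is feasible. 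So there are exchanges $(e,f)$ and $(e',f')$ with change vectors $d^+,d^-$ and $L(d^+)>U(d^-)$. In all five such pairs, $d^++d^-\le 0$ and $d^++d^-\neq 0$; for example $(\delta,-2\delta)+(-\delta,\delta)=(0,-\delta)$.
\item If the two exchanges share an edge, equidistance forces $d^\pm=\pm(\delta,-\delta)$, which do not conflict. This is exactly what fails for $\{1,2,4\}$ in the triangle $(1,4),(4,1),(2,2)$.
\item Otherwise, by the two-element exchange property of spanning trees, either $T-f-f'+e+e'$ is a tree, which dominates $T$, or both crossed exchanges $T-f+e'$ and $T-f'+e$ are trees. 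The crossed change vectors also sum to $d^++d^-$, and a short check with coordinates in $\{a,b,c\}$ shows one of them is $\le 0$ and nonzero, so that crossed tree dominates $T$.
\end{enumerate}
Either way $T$ is dominated, a contradiction. Hence every Pareto-optimal tree minimises $w_\lambda$ for some $\lambda\in(0,1)$, and its vector lies on a supporting line of $\mathrm{conv}(\mathcal{F})$ with positive normal, which is the statement.
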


\begin{proof}
Let $q_1,q_2,\dots,q_r$ denote the Pareto-optimal objective vectors, ordered by increasing $f_1$.
By Lemma~\ref{lem:breakpoints_123_eq}, the efficient boundary of $\mathrm{conv}(F)$ has at most two breakpoints, denoted by $q_a$ and $q_b$. Every Pareto-preserving exchange induces one of the difference vectors
$(\delta,-2\delta),\ (\delta,-\delta),\ (2\delta,-2\delta),\ (2\delta,-\delta)$,
whose slopes are $-2$, $-1$, and $-\frac12$. Along the efficient boundary, the slopes appear in nondecreasing order, yielding at most three linear phases. Phase~1 is a segment of slope $-2$ connecting $q_1$ to $q_a$, Phase~2 is a segment of slope $-1$ connecting $q_a$ to $q_b$, and Phase~3 is a segment of slope $-\frac{1}{2}$ connecting $q_b$ to $q_r$.

\textbf{Phase 1 (From $q_1$ to $q_a$ (slope $-2$)):}
In this phase, consecutive Pareto points satisfy
$q_{i+1}-q_i=(\delta,-2\delta)$.
Hence, all points between $q_1$ and $q_a$ lie on an affine line of slope $-2$.
Assume for contradiction that there exists a Pareto-optimal point $p$ in this region that does not lie on this line. Then $p$ must arise from one of the other exchange vectors, whose slope is strictly larger than $-2$. Therefore, $p$ lies strictly above the slope $-2$ line through $q_1$, and is dominated by points on that line, contradicting Pareto optimality. Hence, all Pareto-optimal points between $q_1$ and $q_a$ lie on the efficient boundary.

\textbf{Phase 2 (From $q_a$ to $q_b$ (slope $-1$)):}
At $q_a$ the slope changes from $-2$ to $-1$. In this region,
$q_{i+1}-q_i\in\{(\delta,-\delta),(2\delta,-2\delta)\}$.
Both vectors lie on the same line of slope $-1$. Hence, all Pareto-optimal points
in this region lie on a common affine line.

Suppose there exists a Pareto-optimal point $p$ not on this line. Such a point would have to arise from an exchange vector with slope either $-2$ or $-\frac12$.
If the slope were $-2$, the slope along the efficient boundary would decrease, contradicting convexity. If the slope were $-\frac12$, then $p$ would lie strictly above the slope $-1$ segment between $q_a$ and $q_b$, and hence be dominated by a convex combination of $q_a$ and $q_b$. This is again impossible. Thus, all Pareto-optimal points in this phase lie on the efficient boundary.

\textbf{Phase 3 (From $q_b$ to $q_r$ (slope $-\frac12$)):}
In this final phase, consecutive Pareto points satisfy
$q_{i+1} - q_i=(2\delta,-\delta)$,
and therefore all points lie on a line of slope $-\frac12$.

Suppose, for contradiction, that there exists a Pareto-optimal point $p$ in this region that does not lie on this line. Then $p$ must be generated by an exchange vector with slope $-2$ or $-1$, both of which are strictly smaller than $-\frac12$. However, along the efficient boundary of a convex polygon, slopes are nondecreasing from left to right. Since the slope has already increased to $-\frac12$ at $q_b$, it cannot decrease again. This contradiction shows that all Pareto-optimal points between $q_b$ and $q_r$ lie on the efficient boundary.

Since in each phase all Pareto-optimal points lie on a single line segment of the convex hull, every Pareto-optimal objective vector lies on the efficient boundary of $\mathrm{conv}(F)$. \qed
\end{proof}

\subsection{Analysis of \gsemo}
We now analyze the behavior of \gsemo in the three-weight setting and derive bounds on its population size and runtime.
\begin{lemma}
\label{lem:3w}
  For the bi-MST problem in which each edge weight is taken from $\{a,b,c\}$ with  $a < b < c$ and $b-a = c-b = \delta > 0$, the population size of \gsemo after it has obtained a spanning tree for the first time is at most $2n-1$.
\end{lemma}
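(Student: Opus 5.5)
Following the proof of Lemma~\ref{lem:2w}, we count the possible values of the first objective and use the fact that \gsemo never keeps two solutions with the same $f_1$-value. Once a spanning tree has been obtained, every solution in $P$ is a spanning tree. A spanning tree is never dominated by a non-tree, and it strictly dominates every non-tree because of the penalty terms. Since $P$ always consists of mutually non-dominated solutions, two trees in $P$ with the same $f_1$-value cannot coexist. If their $f_2$-values differ, one dominates the other. If their $f_2$-values are equal, the newer one replaces the older one, since the acceptance step removes every $z$ with $f(y)\preceq f(z)$. Hence $|P|$ is at most the number of distinct values $f_1(T)$ can take over spanning trees $T$.

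Next, we bound the number of such values using equidistance. Write each $w_1$-weight as $a + j\delta$ with $j\in\{0,1,2\}$, where $a$ corresponds to $j=0$, $b$ to $j=1$ and $c$ to $j=2$. For a spanning tree $T$ let $J(T)=\sum_{e\in T} j(e)$. Then
\[
f_1(T) = (n-1)a + \delta\, J(T).
\]
Each of the $n-1$ edges contributes $j(e)\in\{0,1,2\}$, so $J(T)\in\{0,1,\dots,2(n-1)\}$. This set has $2n-1$ elements. Since $f_1$ is an injective affine function of $J$, $f_1$ takes at most $2n-1$ distinct values on spanning trees. Therefore the population size is at most $2n-1$.

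The key step is recognising that equidistance collapses the a priori quadratic number of weight combinations $(x_a,x_b,x_c)$ with $x_a+x_b+x_c=n-1$ to a linear number of sums. Without $b-a=c-b$, different combinations could give different $f_1$-values, and this bound would fail. Everything else is routine. The only care needed is the tie case with equal objective vectors, which the acceptance rule of Algorithm~\ref{alg:gsemo} handles as described above.
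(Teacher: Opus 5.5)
Your proposal is correct and follows the same route as the paper: you bound $|P|$ by the number of distinct $f_1$-values of spanning trees, which equidistance restricts to $\{(n-1)a + j\delta : 0 \le j \le 2(n-1)\}$, giving $2n-1$ values. You also spell out the step the paper leaves implicit, namely that $P$ keeps at most one tree per $f_1$-value, including the tie case handled by the acceptance rule.
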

\begin{proof}
Consider weight function $w_1$. Assuming that a spanning tree has  already been obtained, an accepted solution can only take on values in $W^*=\{(n-1)a, (n-1)a+\delta, \ldots, (n-1)a+2(n-1)\delta\}$ with respect to $w_1$. We have $|W^*| \leq 2(n-1)+1=2n-1$. Hence, the population size of \gsemo after it has obtained a spanning tree for the first time is at most $2n-1$.
\qed
\end{proof}

\begin{theorem}
For the bi-MST problem in which each edge weight is taken from $\{a,b,c\}$ with  $a < b < c$ and $b-a = c-b = \delta > 0$, the expected time of \gsemo to obtain the entire Pareto front is $O(m^2 n \log n)$.
\end{theorem}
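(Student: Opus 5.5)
The proof follows the same three-stage template as the two-weight theorem, with Lemma~\ref{lem:3w} in place of Lemma~\ref{lem:2w}.

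\textbf{Stage 1 (spanning trees and extreme points).} By Lemma~\ref{lem:neumann}, \gsemo reaches a population consisting only of spanning trees in expected time $O(m\log n)$. From then on, Lemma~\ref{lem:3w} gives $|P|_{\max}\le 2n-1$. Since $w_{\max}=c$ is a constant of the instance, Lemma~\ref{lem:neumann_extreme_points} yields spanning trees for $q_1$ and $q_r$ in expected time $O(m^2 n(\log n+\log w_{\max}))=O(m^2n\log n)$.

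\textbf{Stage 2 (the front is finite and totally ordered).} Let $q_1,\dots,q_r$ be the Pareto-optimal vectors ordered by increasing $f_1$. Their $f_1$-values lie in the set $W^*$ of Lemma~\ref{lem:3w}, so $r\le 2n-1$. By Lemma~\ref{lem:breakpoints_123}, every $q_i$ lies on the efficient boundary of $\mathrm{conv}(\mathcal{F})$. This boundary consists of at most three linear phases with slopes $-2,-1,-\tfrac12$.

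\textbf{Stage 3 (inductive filling of the front).} Assume the population already contains trees for $q_1,\dots,q_i$. I claim that a tree $T_{i+1}$ with vector $q_{i+1}$ is reachable from the tree $T_i$ of $q_i$ by a single edge exchange. I also claim that the number of such exchanges is at least proportional to the number of remaining points, roughly $r-i$ up to a constant factor.

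To argue this, I would take a Pareto-optimal tree $T'$ for $q_r$ and use the strong exchange property of spanning trees. The symmetric difference $T_i\triangle T'$ admits a bijection of $|T_i\setminus T'|$ exchanges, each individually feasible from $T_i$. By Lemma~\ref{lem:fix_edges}, none of these exchanges involves dominating edge types. Each feasible exchange therefore produces a move vector among $(\delta,-\delta),(\delta,-2\delta),(2\delta,-\delta),(2\delta,-2\delta)$, or a dominated move. A dominated move cannot lead to an improvement, since the resulting tree would have to stay within the front structure.

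The exchange whose vector matches the current phase slope lands on the boundary segment and hence at $q_{i+1}$. If the move has length $2\delta$ while a point exists at distance $\delta$, I would instead use the exchange $(a,b)\leftrightarrow(b,a)$. The key point is that exchanges of steeper slope than the current phase would produce points below the hull, which Lemma~\ref{lem:breakpoints_123} forbids. So the available exchanges must move along the current phase or beyond it.

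Each specific 2-bit flip on a chosen individual happens with probability at least $\frac{1}{e m^2 |P|}\ge \frac{1}{e m^2(2n-1)}$. With $\Omega(r-i)$ good exchanges, the waiting time is $O(m^2n/(r-i))$. Summing over $i$ gives $O(m^2 n\log r)=O(m^2n\log n)$.

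\textbf{Main obstacle.} The hard step is counting the good exchanges, i.e.\ showing $\Omega(r-i)$ of them lead exactly to $q_{i+1}$ rather than skipping it or landing on a dominated point. This is especially delicate near the breakpoints $q_a,q_b$, where the exchange pairs from $T_i\triangle T'$ can have mixed slopes. I would first attempt to show that, within a phase, every remaining bijective exchange has the phase's slope. If that fails, I would fall back to guaranteeing at least one good exchange per step, giving $O(m^2 n\cdot r)=O(m^2n^2)$. I would then try to recover the $\log n$ factor phase by phase, using the fact that there are only three phases.
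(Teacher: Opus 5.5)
Your Stages~1 and~2 match the paper's proof. The gap is Stage~3, which is where the whole $\log n$ factor comes from, and you leave it unproved. Your fallback gives only $O(m^2n^2)$, and ``recovering the $\log n$ phase by phase'' is not an argument. Several supporting steps are also not justified as written:
\begin{itemize}
\item That no exchange from $T_i$ introduces a dominating edge type follows from Pareto-optimality of $T_i$, not from Lemma~\ref{lem:fix_edges}.
\item ``A dominated move cannot lead to an improvement'' does not rule out worsening exchanges.
\item With $T'$ at $q_r$, the bijective exchanges can mix all slopes, so you cannot isolate the ones landing on $q_{i+1}$.
\end{itemize}
The missing idea is to use the supporting line of the current phase and to aim the bijection at the far end of that phase. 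For the slope $-2$ phase, set $\phi=2f_1+f_2$ and let $C$ be its value on that hull edge. Every spanning tree $S$ satisfies $\phi(f(S))\ge C$, because some Pareto-optimal vector weakly dominates $f(S)$ and $\mathrm{conv}(\mathcal{F})\subseteq\{\phi\ge C\}$. Moreover, every tree with $\phi=C$ is Pareto-optimal. Now let $T'$ be a tree at the last point of the phase, and let $\sigma\colon T_i\setminus T'\to T'\setminus T_i$ be a Brualdi bijection, so $T_i-e+\sigma(e)$ is a spanning tree for every $e$. Each change vector $v_e$ satisfies $\phi(v_e)\ge 0$, while $\sum_e v_e=f(T')-f(T_i)$ has $\phi$-value $0$. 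Hence $\phi(v_e)=0$ for all $e$, and since the entries lie in $\{0,\pm\delta,\pm2\delta\}$, this forces $v_e\in\{0,\pm(\delta,-2\delta)\}$. So at least as many exchanges as there are remaining points in the phase have $v_e=(\delta,-2\delta)$. Each of them yields a Pareto-optimal tree at $q_i+(\delta,-2\delta)=q_{i+1}$, and this holds for whichever tree currently represents $q_i$. The slope $-\tfrac12$ phase is symmetric, with $\phi=f_1+2f_2$.

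In the slope $-1$ phase, the same argument only gives $v_e\in\{0,\pm(\delta,-\delta),\pm(2\delta,-2\delta)\}$. There your claim that $q_{i+1}$ is one exchange away from the tree at $q_i$ is false in general. Your remedy of switching to an $(a,b)\leftrightarrow(b,a)$ exchange also fails, since it presupposes such a pair is available. For a counterexample, take weights $\{1,2,3\}$ and a chain of triangles glued at cut vertices: one triangle with edges $(1,2),(2,1),(1,1)$, and $k$ triangles with edges $(1,3),(3,1),(1,1)$. The front is $\{(2k+2+s,\,4k+3-s)\colon 0\le s\le 2k+1\}$, a single slope $-1$ phase. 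Every tree at odd $s$ uses $(2,1)$ in the first triangle, so reaching $s+1$ requires changing two blocks, i.e., a 4-bit flip. Thus $q_{i+1}$ can only be created from $q_{i-1}$ by a $(2\delta,-2\delta)$ exchange, or from $q_{i+2}$. For this phase you therefore need a different progress argument. For instance, you could show that the number of (parent, 2-bit flip) pairs producing any missing point stays proportional to the number of missing points. The paper's proof follows your template and simply asserts the count of $d-i$ exchanges from Lemma~\ref{lem:breakpoints_123}, which only locates the Pareto points; the example shows that assertion fails in the slope $-1$ phase. So you identified the real difficulty correctly, but your proposal does not resolve it and does not establish the $O(m^2n\log n)$ bound.
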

\begin{proof}
By Lemma~\ref{lem:neumann}, \gsemo reaches a population consisting only of spanning trees in expected time $O(m \log n)$.
According to Lemma~\ref{lem:3w}, the maximum population size of \gsemo once it has obtained a spanning tree for the first time is $2n-1$.
Consequently, by Lemma~\ref{lem:neumann_extreme_points}, \gsemo obtains spanning trees corresponding to the extreme Pareto-optimal objective vectors $q_1$ and $q_r$ in expected time $O(m^2 n \log n)$.

By Lemma~\ref{lem:breakpoints_123}, the Pareto front consists of three linear phases corresponding to slopes $-2$, $-1$, and $-\tfrac{1}{2}$.
We analyze the time required to traverse one such phase.
Let the Pareto front within a phase consist of the objective vectors $q_u, q_{u+1}, \dots, q_v$ with $v \leq n$, ordered by increasing $f_1$. Let $d = v - u$ denote the length of this phase.

Suppose that the current population of \gsemo already contains spanning trees corresponding to $q_u, \dots, q_i$ for some $u \leq i < v$. To obtain a spanning tree corresponding to $q_{i+1}$, Lemma~\ref{lem:breakpoints_123} implies that only one edge exchange is needed. Moreover, there are $d-i$ such exchange operations that can be applied to a solution corresponding to $q_i$ in order to obtain a solution with objective vector $q_{i+1}$.
The probability of performing one of these $d-i$ exchange operations in a single iteration is at least
$\frac{1}{e} \cdot \frac{d-i}{m^2 (2n-1)}.$

Summing over all values of $i$, the expected time to obtain all Pareto-optimal objective vectors in the phase can be upper bounded by
$$e m^2 (2n-1)\sum_{i=1}^{d-1} \frac{1}{d-i}
= O(m^2 n) \sum_{i=1}^{d-1} \frac{1}{d-i}
= O(m^2 n \log d)
= O(m^2 n \log n).$$
Thus, the expected time to obtain Pareto-optimal objective vectors in one phase is $O(m^2 n \log n)$. Since the Pareto front consists of at most three phases, the total expected time to obtain all Pareto-optimal objective vectors is $O(m^2 n \log n)$. \qed
\end{proof}

\subsection{Experimental Results}
We now present experimental results to evaluate the performance of \gsemo under different instance parameters.

Table~\ref{tab:123} presents the results for the weight set $\{1,2,3\}$.
The parameters $n$ and the performance metrics are defined as in Subsection~\ref{sec:experimental_12}. The parameter $p$ denotes the percentage of edges assigned mixed weights $(1,2)$, $(1,3)$, $(2,1)$, $(2,2)$, $(2,3)$, $(3,1)$, and $(3,2)$, while the remaining edges have weights $(1,1)$ or $(3,3)$. For each parameter setting, the values are averaged over 30 randomly generated complete graphs.

The observed behavior follows a similar trend to the $\{1,2\}$ case, but with a noticeably higher complexity. For small values of $p$, the Pareto-optimal set remains small, often consisting of only a few solutions, and the algorithm converges relatively quickly. 
As $p$ increases, the number of mixed edge types grows, leading to a richer set of trade-offs between the objectives and consequently a larger Pareto front.
This increase in the size of the Pareto-optimal set results in a substantial growth in both the number of iterations and the runtime. 

\begin{table}[t]
\centering
\small
\caption{Performance of \gsemo with edge weights $\{1,2,3\}$. Iteration values are scaled by $10^6$, and runtime values are given in seconds. Values in parentheses denote standard deviations.}
\label{tab:123}
\resizebox{\textwidth}{!}{%
\begin{tabular}{
cc|c
>{\columncolor{gray!10}}c
c
>{\columncolor{gray!10}}c
c
>{\columncolor{gray!10}}c
c
>{\columncolor{gray!10}}c
}
\toprule
$n$ & metric & \multicolumn{8}{c}{$p$} \\
\cmidrule(lr){3-10}
 &  & 50 & 60 & 70 & 80 & 90 & 94 & 98 & 100 \\
\midrule

\multirow{3}{*}{20}
& size
& $1.17\ [0.45]$ & $1.37\ [0.60]$ & $2.03\ [0.98]$ & $4.17\ [1.95]$
& $10.83\ [1.90]$ & $14.63\ [1.72]$ & $18.07\ [0.89]$ & $19.93\ [0.25]$ \\
& iter
& $0.104\ [0.073]$ & $0.166\ [0.098]$ & $0.201\ [0.121]$ & $0.467\ [0.225]$
& $1.039\ [0.623]$ & $1.611\ [0.666]$ & $1.729\ [0.797]$ & $2.675\ [1.425]$ \\
& runtime
& $0.46\ [0.31]$ & $0.74\ [0.44]$ & $0.90\ [0.53]$ & $2.08\ [0.99]$
& $4.74\ [2.91]$ & $7.56\ [3.03]$ & $8.26\ [3.92]$ & $12.46\ [6.65]$ \\

\midrule

\multirow{3}{*}{50}
& size
& $1.00\ [0.00]$ & $1.00\ [0.00]$ & $1.03\ [0.18]$ & $1.23\ [0.50]$
& $5.60\ [1.87]$ & $14.73\ [3.21]$ & $37.47\ [2.31]$ & $50.00\ [0.00]$ \\
& iter
& $1.20\ [0.60]$ & $1.72\ [0.71]$ & $3.74\ [1.42]$ & $7.42\ [3.17]$
& $23.24\ [9.00]$ & $41.76\ [17.15]$ & $66.96\ [29.70]$ & $83.12\ [37.55]$ \\
& runtime
& $31.88\ [15.11]$ & $45.89\ [20.00]$ & $103.08\ [39.06]$ & $203.74\ [84.92]$
& $660.82\ [255]$ & $1,019\ [446]$ & $1,694\ [757]$ & $2,428\ [1,120]$ \\

\midrule

\multirow{3}{*}{100}
& size
& $1.00\ [0.00]$ & $1.00\ [0.00]$ & $1.00\ [0.00]$ & $1.00\ [0.00]$
& $1.47\ [0.67]$ & $6.37\ [2.39]$ & $51.62\ [4.08]$ & $100\ [0.00]$ \\
& iter
& $11.59\ [5.41]$ & $17.05\ [7.27]$ & $26.91\ [8.61]$ & $61.63\ [23.81]$
& $217.52\ [79.67]$ & $419.62\ [138.80]$ & $926.44\ [242.02]$ & $1278.50\ [431.07]$ \\
& runtime
& $1,117\ [529]$ & $1,639\ [688]$ & $2,641\ [913]$ & $5,880\ [2,191]$
& $21,307\ [7,564]$ & $46,792\ [15,001]$ & $88,281\ [22,116]$ & $132,040\ [44,974]$ \\

\bottomrule
\end{tabular}
}
\end{table}

\section{Three Distinct Weight Values: Non-equidistant Case}
\label{sec6}
We now extend our analysis to the non-equidistant case with three distinct weight values, where each edge weight takes values from $\{a,b,c\} \subset \mathbb{R}_{>0}$ with $a<b<c$ and $b-a \neq c-b$.

In contrast to the previous settings, we now show that when there are three distinct weight values that are not equidistant, not all Pareto-optimal objective vectors lie on $\mathrm{conv}(\mathcal{F})$ anymore. By way of counterexample, we show in Example~\ref{emp:124} the existence of a graph for which a particular objective vector ($q_3$) is Pareto optimal, but is not on the convex hull.

\begin{example}
\label{emp:124}
Consider the graph $G=(V,E)$ with $V=\{a,b,c,d,e\}$ and bi-objective weights shown below.
The corresponding Pareto-optimal spanning trees are:
\[
\begin{aligned}
T_1 &= \{ab,ca,cd,de\}, & q_1 &= (7,11), & T_3 &= \{ab,bc,cd,ce\}, & q_3 &= (11,7), \\
T_2 &= \{bc,ca,cd,ce\}, & q_2 &= (9,8), & T_4 &= \{ac,bc,cd,ce\}, & q_4 &= (12,5).\\
\end{aligned}
\]

\begin{center}
\resizebox{0.65\textwidth}{!}{%
\begin{tikzpicture}[font=\small]
  \begin{scope}
    \coordinate (a) at (0,2.2);
    \coordinate (b) at (1.5,4.4);
    \coordinate (c) at (3,2.2);
    \coordinate (e) at (6,2.2);
    \coordinate (d) at (4.5,0);

    \draw (a) -- (b);
    \draw (b) -- (c);
    \draw (a) -- (c);
    \draw (c) -- (e);
    \draw (c) -- (d);
    \draw (d) -- (e);

    \foreach \v/\name in {a/a,b/b,c/c,d/d,e/e}{
      \draw[fill=white] (\v) circle (2pt);
    }

    \node[left]  at (a) {$a$};
    \node[above] at (b) {$b$};
    \node[above] at (c) {$c$};
    \node[right] at (e) {$e$};
    \node[below] at (d) {$d$};

    \path (a) -- (b) coordinate[midway] (mab);
    \path (b) -- (c) coordinate[midway] (mbc);
    \path (a) -- (c) coordinate[midway] (mac);
    \path (c) -- (e) coordinate[midway] (mce);
    \path (c) -- (d) coordinate[midway] (mcd);
    \path (d) -- (e) coordinate[midway] (mde);

    \node[left]  at ($(mab)+(-0.05,0.10)$) {$(1,4)$};
    \node[above] at ($(mbc)+(0.30,0.05)$) {$(4,1)$};
    \node[below] at ($(mac)+(0.00,-0.05)$) {$(2,2)$};
    \node[above] at ($(mce)+(0.00,0.05)$) {$(4,1)$};
    \node[left]  at ($(mcd)+(0.00,0.00)$) {$(2,1)$};
    \node[right] at ($(mde)+(0.15,0.00)$) {$(2,4)$};
  \end{scope}

  \begin{scope}[xshift=8.2cm, yshift=0cm]
    \draw[->, line width=0.9pt] (0,0) -- (4.0,0) node[right] {$f_2$};
    \draw[->, line width=0.9pt] (0,0) -- (0,4.0) node[above] {$f_1$};

    \draw[gray!60, very thin] (0,0) grid[xstep=0.9,ystep=0.9] (4.0,4.0);

    \coordinate (q1) at (7/4,11/4);
    \coordinate (q2) at (9/4,8/4);
    \coordinate (q3) at (11/4,7/4);
    \coordinate (q4) at (12/4,5/4);

    \draw[dashed, line width=0.9pt] (q1) -- (q2) -- (q4);

    \draw[fill=white] (q1) circle (2.2pt);
    \draw[fill=white] (q2) circle (2.2pt);
    \draw[fill=black] (q3) circle (2.2pt);
    \draw[fill=white] (q4) circle (2.2pt);

    \node[above right] at (q1) {$q_1$};
    \node[above right] at (q2) {$q_2$};
    \node[above right] at (q3) {$q_3$};
    \node[above right] at (q4) {$q_4$};
  \end{scope}
\end{tikzpicture}%
}
\end{center}
\end{example}

\subsection{Multiple double triangles}
We generalize the graph in Example~\ref{emp:124} by taking a sequence of $k$ copies of the double triangle graph therein.
We call the first triangle consisting of the edges (1,4), (4,1), and (2,2) a type $A$ triangle and the other one a type $B$ triangle, as illustrated in Figure~\ref{fig:k-triangles}.
\begin{figure}[ht]
  \centering
    \begin{tikzpicture}[scale=0.6, transform shape, start chain=chain]
    \tikzstyle{vertex}=[draw,circle,inner sep=1pt]
      \foreach \i in {1,...,5}{
        \node[on chain,vertex] {};
      }
      \node[on chain] {$\cdots$};
      \foreach \i in {1,...,3}{
        \node[on chain,vertex] {};
      }
      \foreach \x [remember=\x as \lastx (initially 1)] in {2,...,5}{
        \tikzmath{
          integer \sign;
          \sign = (-1)^\x; 
        }
        \path[draw] (chain-\lastx) -- (chain-\x)
        node[pos=0.5,yshift=\sign cm,vertex] (c-\lastx-\x) {};
        \draw (chain-\lastx) -- (c-\lastx-\x);
        \draw (c-\lastx-\x) -- (chain-\x);
      }
      \foreach \x [remember=\x as \lastx (initially 7)] in {8,...,9}{
        \tikzmath{
          integer \sign;
          \sign = (-1)^\x; 
        }
        \path[draw] (chain-\lastx) -- (chain-\x)
        node[pos=0.5,yshift=\sign cm,vertex] (c-\lastx-\x) {};
        \draw (chain-\lastx) -- (c-\lastx-\x);
        \draw (c-\lastx-\x) -- (chain-\x);
      }
      \draw (chain-5) -- +(0.5,0);
      \draw (chain-7) -- +(-0.5,0);

      \node[below=3mm of c-1-2] {$A_1$};
      \node[above=3mm of c-2-3] {$B_1$};
      \node[below=3mm of c-3-4] {$A_2$};
      \node[above=3mm of c-4-5] {$B_2$};
      \node[below=3mm of c-7-8] {$A_k$};
      \node[above=3mm of c-8-9] {$B_k$};

    \end{tikzpicture}
    \caption{A sequence of $k$ double triangles.}
    \label{fig:k-triangles}
\end{figure}

\begin{lemma}
\label{lem:3k+1}
For any $k \geq 1$, the bi-MST instance consisting of $k$ pairs of triangles of type $A$ and type $B$ has exactly $3k+1$ Pareto-optimal objective vectors.
\end{lemma}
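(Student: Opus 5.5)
The plan is to reduce the problem to a purely combinatorial count over the choices made in each triangle, and then to count the non-dominated sums explicitly. Since the triangles of Figure~\ref{fig:k-triangles} are glued only at cut vertices, a spanning tree is obtained exactly by deleting one edge from each of the $2k$ triangles, independently. The objective vector is therefore the sum of per-triangle contributions.

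\textbf{Per-triangle contributions.} For a type $A$ triangle with edges $(1,4),(4,1),(2,2)$, the three choices give contributions $(3,6)$, $(5,5)$ and $(6,3)$, all mutually incomparable. For a type $B$ triangle with edges $(4,1),(2,1),(2,4)$ (as in Example~\ref{emp:124}), the choices give $(6,2)$, $(4,5)$ and $(6,5)$. The last one is dominated by $(6,2)$, so in a Pareto-optimal tree it never occurs: switching that triangle would dominate the tree.

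\textbf{Reduction to three counts.} Take the base choice $(3,6)$ in every $A$ and $(4,5)$ in every $B$; this base is $q_1=(7k,11k)$. Every tree is then described by three counts $x,y,z$ with $x+y\le k$ and $z\le k$, where the objective vector is
\[
q_1+x\,u+y\,v+z\,w,\qquad u=(2,-1),\quad v=(3,-3),\quad w=(2,-3).
\]
Here $x$ counts $A$'s using $(5,5)$, $y$ counts $A$'s using $(6,3)$, and $z$ counts $B$'s using $(6,2)$. Two local-exchange observations then restrict Pareto optimality.
\begin{itemize}
\item[(i)] Since $v=(3,-3)$ dominates $2u=(4,-2)$, replacing two $u$-choices by one $v$ and one base choice yields a dominating tree. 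Hence $x\le 1$.
\item[(ii)] Since $w=(2,-3)$ dominates $u=(2,-1)$, if $x\ge 1$ and $z<k$ we can move one $A$ back to base and one $B$ to $w$, which again dominates. Hence $x=1$ forces $z=k$.
\end{itemize}
So the candidates are the points with $x=0$ and $0\le y,z\le k$, together with the points with $x=1$, $z=k$ and $0\le y\le k-1$.

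\textbf{Counting the front.} For $x=0$ the offset is $(3y+2z,\,-3(y+z))$. Fixing $s=y+z$, the value $f_2$ is fixed and $f_1$ is minimized by taking $z=\min(s,k)$. This gives one candidate per $s\in\{0,\dots,2k\}$, with $f_1$ strictly increasing and $f_2$ strictly decreasing in $s$, so there are $2k+1$ mutually non-dominated points.

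For $x=1$ the offset is $(2k+3y+2,\,-3k-3y-1)$. This point lies strictly between the $x=0$ points with $s=k+y$ and $s=k+y+1$ in both coordinates, so it is incomparable with both. It is also incomparable with all other $x=0$ points by monotonicity: points with smaller $s$ have larger $f_2$, and points with larger $s$ have larger $f_1$. The $x=1$ points are also pairwise distinct and incomparable with one another.

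It remains to check that the $x=0$ points are not dominated by $x=1$ points, which follows from the same interleaving. This yields exactly $2k+1+k=3k+1$ Pareto-optimal vectors. As a check, for $k=1$ this recovers the four vectors of Example~\ref{emp:124}.

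I expect the main obstacle to be making the dominance arguments (i) and (ii) rigorous as a characterization, i.e.\ showing the candidate set is exactly the Pareto front rather than a superset. The interleaving argument does this, provided it is written so that both strict inequalities are checked for every pair of candidates.
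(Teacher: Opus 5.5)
Your proof is correct and produces the same $3k+1$ vectors, but it works at a finer granularity than the paper. The paper decomposes by triangle pairs. It takes from Example~\ref{emp:124} that each pair contributes one of $q_1=(7,11),\dots,q_4=(12,5)$, so your base is $kq_1$. It writes global vectors as $\alpha q_1+\beta q_2+\gamma q_3+\delta q_4$ and prunes with the swaps $q_2+q_4\prec 2q_3$, $2q_2\prec q_1+q_3$ and $2q_2\prec q_1+q_4$. It then checks that $A_0,\dots,A_k,C_0,B_1,C_1,\dots,C_{k-1},B_k$ is strictly monotone. You decompose by single triangles, and your pruning steps are the same exchanges in other coordinates:
\begin{itemize}
\item $2q_3-(q_2+q_4)=2u-v=(1,1)$ is your (i);
\item $q_1+q_3-2q_2=u-w=(0,2)$ is your (ii);
\item $q_1+q_4-2q_2=v-w=(1,0)$ is exactly what justifies taking $z=\min(s,k)$ on each level $s=y+z$.
\end{itemize}
Accordingly, your $x=0$ chain is the paper's $A_0,\dots,A_k,B_1,\dots,B_k$, and your $x=1$ points are its $C_t$, interleaved in the same way.

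Your route buys self-containment. You derive the local options directly from the edge weights and the cut-vertex structure instead of relying on Example~\ref{emp:124} for the local front. That example's tree list actually contains an error: the edge set given for $T_2$ evaluates to $(12,5)$ and coincides with $T_4$, whereas $(9,8)$ is realized by $\{ab,ac,cd,ce\}$. The paper's route is shorter once the four local vectors are granted.

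On the step you flag as the main obstacle: pairwise incomparability of the candidates does not by itself rule out a non-candidate tree dominating a candidate. To close it, add that the set of trees is finite, so every tree is weakly dominated by some Pareto-optimal tree. By your pruning, that tree lies among the candidates, so a tree dominating a candidate would give a candidate dominating another candidate, contradicting the strict monotonicity of the chain. The paper leaves this step implicit as well.
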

\begin{proof}
For a single triangle pair, Example~\ref{emp:124} shows that the only Pareto-optimal objective vectors are $q_1, q_2, q_3, q_4$. Since each triangle pair is independent of the others in the construction, a spanning tree of the whole graph with $k$ such pairs is obtained by choosing one local spanning tree configuration for each pair. The objective vector of the resulting spanning tree is therefore the sum of the objective contributions of the $k$ local spanning trees for each triangle pair. 

Moreover, in a Pareto-optimal global solution, each pair must contribute a non-dominated local vector. Hence, each pair contributes one of $q_1, q_2, q_3, q_4$. Consequently, every Pareto-optimal objective vector of the whole graph can be written as
\[
Q(\alpha,\beta,\gamma,\delta) =\alpha q_1+\beta q_2+\gamma q_3+\delta q_4, 
 \qquad
\alpha+\beta+\gamma+\delta=k,
\]
where $\alpha,\beta,\gamma,\delta \in \{0,1,\ldots,k\}$.
Now we identify which combinations can be Pareto-optimal.

\textbf{Step 1: At most one copy of \boldsymbol{$q_3$} can occur.}
Indeed,
$2q_3=(22,14)$
and
$q_2+q_4=(21,13)$,
so $q_2+q_4 \prec 2q_3$. Hence any vector containing at least two copies of $q_3$ is dominated by replacing two copies of $q_3$ with one copy of $q_2$ and one copy of $q_4$. Therefore, every Pareto-optimal vector must satisfy
$\gamma \leq 1$.

\textbf{Step 2: If \boldsymbol{$\gamma=1$}, then \boldsymbol{$\alpha=0$}.} 
Indeed,
$q_1+q_3=(18,18)$
and
$2q_2=(18,16)$,
so $2q_2 \prec q_1+q_3$. Thus, if a vector contains both $q_1$ and $q_3$, it is dominated. Hence, whenever $\gamma=1$, we must have $\alpha=0$.

\textbf{Step 3: If \boldsymbol{$\gamma=0$}, then \boldsymbol{$\alpha=0$} or \boldsymbol{$\delta=0$}.}
Indeed,
$q_1+q_4=(19,16)$
and
$2q_2=(18,16)$,
so $2q_2 \prec q_1+q_4$. Therefore, any vector containing both $q_1$ and $q_4$ is dominated. Hence, if $\gamma=0$, then either all summands come from $\{q_1,q_2\}$ or all summands come from $\{q_2,q_4\}$.

By the previous steps, every Pareto-optimal objective vector must belong to one of the following three families:
\[
\begin{aligned}
A_i &:= (k-i)q_1 + iq_2, && i=0,\dots,k, \\
B_j &:= (k-j)q_2 + jq_4, && j=1,\dots,k, \\
C_t &:= (k-1-t)q_2 + q_3 + tq_4, && t=0,\dots,k-1.
\end{aligned}
\]

Thus, there are at most $(k+1) + k + k = 3k+1$ candidate Pareto-optimal objective vectors.

\textbf{Step 4: All these vectors are pairwise non-dominated.}
We compute their explicit coordinates:
\[
\begin{aligned}
A_i &= (7k+2i,\; 11k-3i), && i=0,\dots,k, \\
B_j &= (9k+3j,\; 8k-3j), && j=1,\dots,k, \\
C_t &= (9k+2+3t,\; 8k-1-3t), && t=0,\dots,k-1.
\end{aligned}
\]

Now consider the sequence
$A_0,\ A_1, \dots,\ A_k,\ C_0,\ B_1,\ C_1,\ B_2,\ \dots,\ C_{k-1},\ B_k$.
Its consecutive differences for $i=0,\dots,k-1$ are
\[
\begin{aligned}
A_{i+1}-A_i &=(2,-3), && C_0-A_k =(2,-1), \\
B_{i+1}-C_i &=(1,-2), && C_{i+1}-B_{i+1} =(2,-1).
\end{aligned}
\]
Hence, along this sequence, the first objective strictly increases while the second strictly decreases. Therefore, no vector in the sequence dominates another, and all of them are Pareto-optimal.
Thus, the number of Pareto-optimal objective vectors is exactly $3k+1$. \qed
\end{proof}

\begin{theorem}
For any $k \geq 1$, the bi-MST instance consisting of $k$ pairs of triangles of type A and type B has exactly $k$ Pareto-optimal objective vectors that do not lie on $\mathrm{conv}(\mathcal{F})$.
\end{theorem}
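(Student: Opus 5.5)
We plan to use the explicit description of the Pareto front from Lemma~\ref{lem:3k+1}: $\mathcal{F}=\{A_0,\dots,A_k\}\cup\{B_1,\dots,B_k\}\cup\{C_0,\dots,C_{k-1}\}$ with $A_i=(7k+2i,11k-3i)$, $B_j=(9k+3j,8k-3j)$ and $C_t=(9k+2+3t,8k-1-3t)$. We interpret ``lies on $\mathrm{conv}(\mathcal{F})$'' as lying on the efficient boundary of the convex hull, as in the earlier sections. The claim then reduces to two facts. First, every $A_i$ and every $B_j$ lies on that boundary. Second, no $C_t$ does. Since there are exactly $k$ vectors $C_t$, this gives the count $k$.

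\textbf{Step 1: the efficient boundary.} We identify the boundary as the polyline from $A_0=kq_1$ through $A_k=kq_2$ to $B_k=kq_4$. The $A$-points lie on a line of slope $-3/2$ and the $B$-points, together with $A_k=B_0$, lie on a line of slope $-1$. Since $-3/2<-1$, the slopes increase from left to right, so this polyline is convex. To show it is the whole lower-left boundary, we exhibit supporting lines. For the first segment we use the normal $(3,2)$: the functional $3f_1+2f_2$ equals $43k$ on every $A_i$, equals $43k+3j$ on $B_j$, and equals $43k+4$ on $C_t$. For the second segment we use the normal $(1,1)$: $f_1+f_2$ equals $17k$ on every $B_j$ and on $A_k$, equals $18k-i$ on $A_i$ (which is at least $17k$), and equals $17k+1$ on $C_t$. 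Each supporting line therefore contains all points of its segment and has every other point of $\mathcal{F}$ strictly on one side. Hence every $A_i$ and every $B_j$ lies on the efficient boundary, giving $2k+1$ points.

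\textbf{Step 2: the $C_t$ are not on the boundary.} Each $C_t$ has $f_1$-coordinate in $[9k+2,\,9k+3k-1]\subset[9k,12k]$, which is the $f_1$-range of the second segment. On that range the boundary is $\{f_1+f_2=17k\}$. But $C_t$ satisfies $f_1+f_2=17k+1>17k$, so it lies strictly above the boundary. In particular, $C_t$ is dominated by the point of the segment $[B_t,B_{t+1}]$ with the same $f_1$-value, where $B_0:=A_k$. Concretely, $C_t=(B_t+B_{t+1})/2+(1/2,\,1/2)$ up to the correct convex weights; we will check that there are weights $\lambda$ with $\lambda B_t+(1-\lambda)B_{t+1}$ having the same $f_1$ and a smaller $f_2$. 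Thus $C_t$ is not on the efficient boundary.

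The step that needs the most care is confirming that the two segments really form the complete lower-left boundary. The supporting-function inequalities above are checked for every family, including the mixed family $C$. Everything else is routine linear arithmetic on the explicit coordinates. Combining both steps, exactly the $k$ vectors $C_0,\dots,C_{k-1}$ are Pareto-optimal but not on $\mathrm{conv}(\mathcal{F})$.
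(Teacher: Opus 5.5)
Your proposal is correct and follows essentially the paper's route: the explicit coordinates from Lemma~\ref{lem:3k+1}, the convex chain $A_0,\dots,A_k,B_1,\dots,B_k$ with slopes $-\tfrac32$ then $-1$, and the observation that every $C_t$ lies on the parallel line $f_1+f_2=17k+1$, strictly above $f_1+f_2=17k$. Your supporting functionals $3f_1+2f_2$ and $f_1+f_2$, together with the explicit dominating point $\tfrac13B_t+\tfrac23B_{t+1}$ (or simply $\tfrac12(B_t+B_{t+1})$, which beats $C_t$ in both coordinates), just make the paper's convexity claim rigorous; one harmless slip is that $3f_1+2f_2$ equals $43k+4+3t$ on $C_t$, not $43k+4$. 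Your explicit reading of ``on $\mathrm{conv}(\mathcal{F})$'' as ``on the efficient boundary'' is the right one and is in fact necessary: $C_{k-1}$ is the unique maximizer of $6f_1+5f_2$ over $\mathcal{F}$, so it is a vertex on the upper-right part of the hull boundary, and the literal reading would make the count wrong.
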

\begin{proof}
By Lemma~\ref{lem:3k+1}, the Pareto-optimal objective vectors are exactly those in the three families
\[
\begin{aligned}
A_i &= (7k+2i,\; 11k-3i), && i=0,\dots,k, \\
B_j &= (9k+3j,\; 8k-3j), && j=1,\dots,k, \\
C_t &= (9k+2+3t,\; 8k-1-3t), && t=0,\dots,k-1.
\end{aligned}
\]
The vectors $A_i$ lie on a line of slope $-\frac32$, since $A_{i+1}-A_i=(2,-3)$, and the vectors $B_j$ lie on a line of slope $-1$, since $B_{j+1}-B_j=(3,-3)$.
Moreover, $B_1-A_k=(3,-3)$, so the vectors $A_0,\dots,\ A_k,\ B_1,\dots,\ B_k$ form the boundary of the convex hull.

Each vector $B_j$ satisfies $f_1+f_2= 17k$, while each vector $C_t$ satisfies $f_1+f_2= 17k+1$. Furthermore, since $C_{t+1}-C_t=(3,-3)$,
the vectors $C_t$ lie on a line of slope $-1$, parallel to the line containing the vectors $B_j$. As this line lies strictly above the line containing the vectors $B_j$, none of the vectors $C_t$ lies on the boundary of the convex hull. Since there are exactly $k$ such vectors, this completes the proof. \qed
\end{proof}

\subsection{Analysis of \gsemo}
We now analyze the runtime of \gsemo on the constructed instance.
\begin{theorem}
For the bi-MST instance consisting of $k$ pairs of triangles of type A and type B with edge weights taken from $\{1,2,4\}$, the expected time of \gsemo to obtain the entire Pareto-optimal is $O(n^4)$.
\end{theorem}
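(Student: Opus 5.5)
We follow the template of the previous two runtime theorems. There are three stages: reach spanning trees, obtain the extreme vectors, and then walk along the Pareto front by single edge exchanges.

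\textbf{Parameters and population bound.} The instance has $n = 4k+1$ vertices and $m = 6k$ edges, so $m = \Theta(n)$. The edge weights lie in $\{1,2,4\}$, so $w_{\max}=4$ is a constant. After the first spanning tree appears, every population member is a spanning tree. By dominance, each value of $f_1$ is represented by at most one solution. The value $f_1$ of a spanning tree is an integer between $n-1$ and $4(n-1)$, so $|P|_{\max}=O(n)$. A more careful count is unnecessary.

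\textbf{First two stages.} By Lemma~\ref{lem:neumann}, spanning trees are reached in expected time $O(m\log n)$. By Lemma~\ref{lem:neumann_extreme_points}, solutions for the extreme vectors $A_0$ and $B_k$ are then found in expected time $O(m^2 n \log n)=O(n^3\log n)$. Only $A_0$ is actually needed as a starting point.

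\textbf{Walking the front.} We traverse the sequence $A_0,\dots,A_k,C_0,B_1,C_1,\dots,C_{k-1},B_k$ from Lemma~\ref{lem:3k+1}. Each step between consecutive vectors must be realisable by a mutation of constant size.
\begin{itemize}
\item $A_i\to A_{i+1}$: one triangle pair switches from local tree $T_1$ to $T_2$. Here $T_1=\{ab,ca,cd,de\}$ and $T_2=\{bc,ca,cd,ce\}$, so the switch replaces $ab$ by $bc$ and $de$ by $ce$, a $4$-bit flip.
\item $A_k\to C_0$: one pair switches from $T_2$ to $T_3$, which replaces $ca$ by $ab$, a $2$-bit flip.
\item $C_t\to B_{t+1}$: the pair in state $T_3$ moves to $T_4$, which replaces $ab$ by $ac$, a $2$-bit flip.
\item $B_{t+1}\to C_{t+1}$: one pair moves from $T_2$ to $T_3$, again a $2$-bit flip.
\end{itemize}
In every case a specific flip of at most $4$ bits, applied to the population member holding the previous vector, produces the next Pareto-optimal vector.

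This member is never lost. A Pareto-optimal vector, once represented, can only be replaced by a solution with the same objective vector, because nothing dominates it.

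\textbf{Time per step.} At each step we select the correct parent with probability at least $1/|P|_{\max}$. The specific flip then occurs with probability $\Omega(1/m^4)$ in the worst ($4$-bit) case. At least one suitable pair always exists, but we do not rely on multiplicity. The expected waiting time per step is therefore $O(n\cdot m^4)$, and over $3k$ steps the total is $O(n^2 m^4)$. This is far too large; this is the main obstacle.

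\textbf{Reducing the $A$-phase cost.} The $4$-bit moves appear only in the $A$-phase. Two refinements address this.
\begin{itemize}
\item Count the available pairs: from $A_i$, there are $k-i$ pairs still in state $T_1$. The per-step probability is then $\Omega((k-i)/(n m^4))$, and summing gives a harmonic factor. This still does not reach $O(n^4)$.
\item Decompose $T_1\to T_2$ into two $2$-bit exchanges through intermediate trees. The intermediate $\{bc,ca,cd,de\}$ has local vector $(10,8)$, which is dominated by $q_2=(9,8)$, so it may be rejected. The intermediate $\{ab,ca,cd,ce\}$ has local vector $(9,8)$, which equals $q_2$. So replacing $de$ by $ce$ alone already moves from $q_1$ to $q_2$.
\end{itemize}
The second point resolves the obstacle. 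I would recheck the local trees: the pair with local vector $(9,8)$ can be $\{ab,ca,cd,ce\}$, giving $1+2+2+4=9$ and $4+2+1+1=8$. With this choice every step is a single $2$-bit exchange, so each step succeeds with probability $\Omega(1/(|P|_{\max} m^2))=\Omega(1/n^3)$.

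\textbf{Conclusion.} Over the $3k=O(n)$ steps the total expected time is $O(n^4)$. This dominates the $O(n^3\log n)$ cost of the first two stages, which gives the claimed bound.
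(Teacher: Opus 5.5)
Your final argument is correct and is essentially the paper's proof. The steps are the same: spanning trees in $O(m\log n)$; $|P|_{\max}=O(n)$, where your count of distinct $f_1$ values is a more accurate justification than the paper's appeal to the front size; the extreme vector via Neumann's lemma; then each of the $3k$ remaining front vectors by a single 2-bit exchange from its predecessor, at $O(nm^2)$ each and $O(n^4)$ in total. Your correction that the local tree with vector $(9,8)$ is $\{ab,ca,cd,ce\}$ is right: the paper's listed $T_2$ coincides with $T_4$, and your tree is exactly what makes $A_i\to A_{i+1}$ a 2-bit flip. With the corrected $T_2$, the $T_2\to T_3$ move swaps $ca$ for $bc$ rather than $ab$, and the 4-bit detour can simply be deleted.
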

\begin{proof}
By Lemma~\ref{lem:neumann}, \gsemo reaches a population consisting only of spanning trees in expected time $O(m \log n)$.

By Lemma~\ref{lem:3k+1}, the Pareto front contains exactly $3k+1$ objective vectors. Since $k=\Theta(n)$ in the considered construction, the population size is bounded by $O(n)$. 
Consequently, by Lemma~\ref{lem:neumann_extreme_points}, \gsemo obtains spanning trees corresponding to the extreme Pareto-optimal objective vectors $q_1$ and $q_r$ in expected time $O(m^2 n \log n)$.

Assume that the current population does not yet contain the entire Pareto front. Then there exists a Pareto-optimal objective vector not yet represented in the population. By the structural properties of the Pareto front, every missing Pareto-optimal objective vector can be obtained from an already present neighboring Pareto-optimal solution by a single feasible edge exchange, that is, by a 2-bit flip.
Since the population size is at most $O(n)$, the probability of selecting a specific parent is at least $\frac{1}{O(n)}$. The probability of performing the required 2-bit flip is at least $\frac{1}{O(m^2)}$.
Therefore, the expected waiting time to obtain one new Pareto-optimal objective vector is at most $O(nm^2)$.

As the Pareto front contains $O(n)$ objective vectors, the expected time to obtain all of them is at most
$O(n)\cdot O(nm^2)=O(n^2m^2)$.
Since $m=\Theta(n)$, the total expected time to obtain all Pareto-optimal objective vectors is $O(n^4)$. \qed
\end{proof}

\subsection{Experimental Results}
The results in Figure~\ref{fig:boxplots} are obtained by running \gsemo $30$ times independently, and the boxplots illustrate the distribution of the number of iterations and runtime in seconds over these runs. As $k$ increases, both the number of iterations and the runtime grow significantly, indicating a clear dependence of the algorithm’s performance on the problem size. In particular, the growth appears linear, which is consistent with the theoretical runtime bound. 
\begin{figure}[t]
    \centering
    \caption{\gsemo performance over $30$ runs. Iterations (left) and runtime (right) vs.\ $k$ pairs triangle A and B.}
\includegraphics[width=0.9\textwidth]{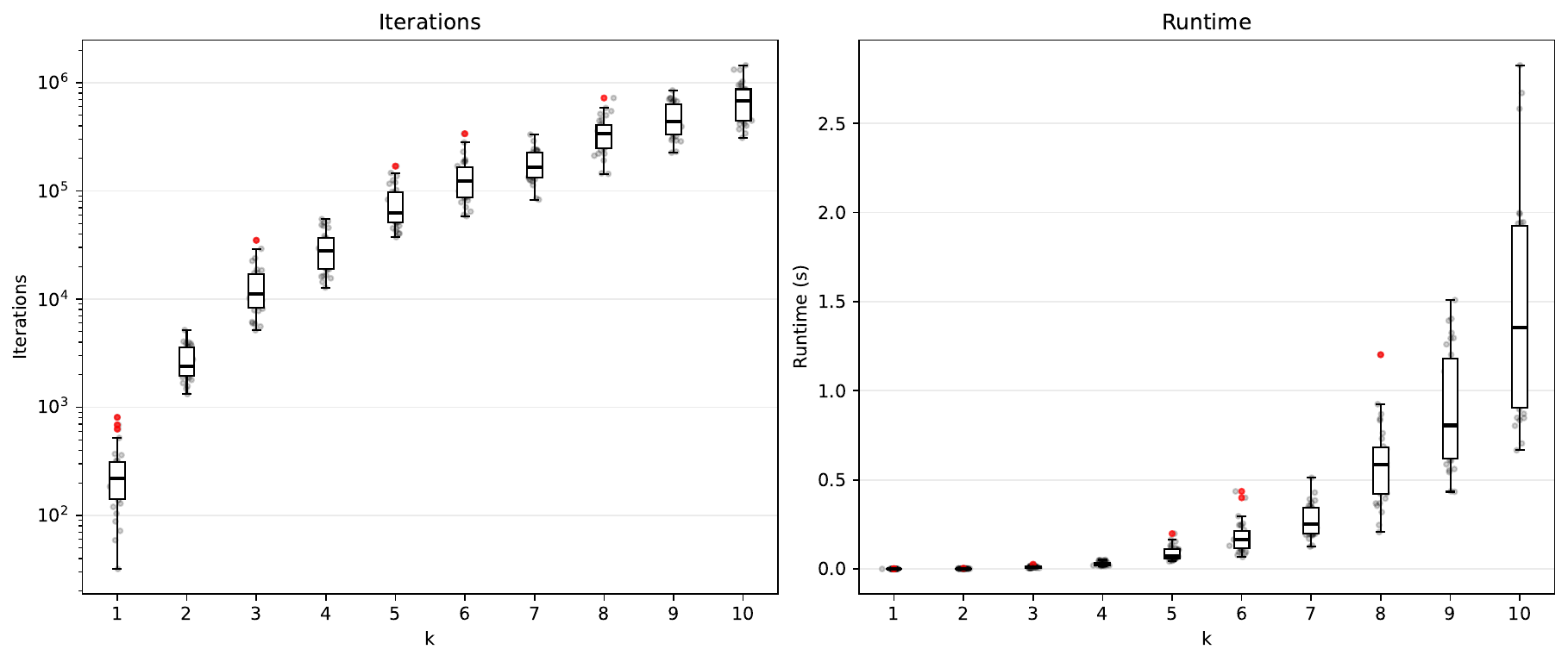}
    \label{fig:boxplots}
\end{figure}

\section{Translation of Results to Popular MOEAs}
\label{sec7}
While we investigated the setup for GSEMO, we note that the results
can be easily transferred to NSGA-II when using a sufficiently large
population size. This assertion is based on the observation that
NSGA-II never loses a Pareto-optimal objective vector if the
population size is at least four times the maximum number of
trade-offs~\cite{DBLP:journals/ai/ZhengD23}. The situation is
different for MOEA/D with linear weighting. We showed in
Section~\ref{sec6} that even when there are
only three distinct edge weight values, the Pareto front can already
be non-convex. This implies that some Pareto-optimal objective vectors are not included in an optimal population for the set up of MOEA/D studied in~\cite{DBLP:conf/nips/DoN0S23}.

\section{Conclusions}
\label{sec8}
In this paper, we studied the structural properties of the Pareto front and the runtime of \gsemo for the bi-MST problem under different edge-weight settings. We showed that the Pareto front has a simple linear structure in the two-weight case, while for three weights it exhibits a more complex geometry, including multiple linear phases and the fact that not all Pareto-optimal objective vectors lie on $\mathrm{conv}(\mathcal{F})$. For a constructed instance with $k$ pairs of triangles, we proved that the Pareto front contains exactly $3k+1$ objective vectors, of which exactly $k$ do not lie on $\mathrm{conv}(\mathcal{F})$. These findings highlight the strong impact of the problem structure on the performance of evolutionary multi-objective algorithms.

\begin{credits}
\subsubsection{\ackname} 
This work was supported by the Australian Research Council through grant FT200100536, the National Science Foundation through grant 2144080, the MOBLILEX scholarship, and the Hauts-de-France Region through funding for this PhD.
\end{credits}

\bibliographystyle{plain}
\bibliography{references}
\end{document}